\documentclass[sigconf]{acmart}
\copyrightyear{2026}
\acmYear{2026}
\setcopyright{cc}
\setcctype{by}
\acmConference[KDD '26]{Proceedings of the 32nd ACM SIGKDD Conference on Knowledge Discovery and Data Mining V.2}{August 09--13, 2026}{Jeju Island, Republic of Korea}
\acmBooktitle{Proceedings of the 32nd ACM SIGKDD Conference on Knowledge Discovery and Data Mining V.2 (KDD '26), August 09--13, 2026, Jeju Island, Republic of Korea}
\acmDOI{10.1145/3770855.3817569}
\acmISBN{979-8-4007-2259-2/2026/08}

\usepackage{algorithm}
\usepackage{algorithmic}
\usepackage{hyperref}
\usepackage{multirow}

\newcommand{\vecK}{\mathbf{k}}
\newcommand{\vecX}{\mathbf{x}}
\newcommand{\vecS}{\mathbf{s}}
\newcommand{\vecY}{\mathbf{y}}
\newcommand{\vecMu}{\boldsymbol{\mu}}
\newcommand{\xDim}{p}
\newcommand{\dDim}{n}

\begin{document}

%%
%% The "title" command has an optional parameter,
%% allowing the author to define a "short title" to be used in page headers.
\title{Benchmarking on Tasks That Matter: Dataset Selection for Preserving Model Rankings}

%%
%% The "author" command and its associated commands are used to define
%% the authors and their affiliations.
%% Of note is the shared affiliation of the first two authors, and the
%% "authornote" and "authornotemark" commands
%% used to denote shared contribution to the research.
% \author{Ben Trovato}
% \authornote{Both authors contributed equally to this research.}
% \email{trovato@corporation.com}
% \orcid{1234-5678-9012}
% \author{G.K.M. Tobin}
% \authornotemark[1]
% \email{webmaster@marysville-ohio.com}
% \affiliation{%
%   \institution{Institute for Clarity in Documentation}
%   \city{Dublin}
%   \state{Ohio}
%   \country{USA}
% }

% \author{Lars Th{\o}rv{\"a}ld}
% \affiliation{%
%   \institution{The Th{\o}rv{\"a}ld Group}
%   \city{Hekla}
%   \country{Iceland}}
% \email{larst@affiliation.org}

% \author{Valerie B\'eranger}
% \affiliation{%
%   \institution{Inria Paris-Rocquencourt}
%   \city{Rocquencourt}
%   \country{France}
% }

% \author{Aparna Patel}
% \affiliation{%
%  \institution{Rajiv Gandhi University}
%  \city{Doimukh}
%  \state{Arunachal Pradesh}
%  \country{India}}

% \author{Huifen Chan}
% \affiliation{%
%   \institution{Tsinghua University}
%   \city{Haidian Qu}
%   \state{Beijing Shi}
%   \country{China}}

% \author{Charles Palmer}
% \affiliation{%
%   \institution{Palmer Research Laboratories}
%   \city{San Antonio}
%   \state{Texas}
%   \country{USA}}
% \email{cpalmer@prl.com}

\author{Rostislav Gusev}
\affiliation{%
  \institution{Applied AI Institute}
  \city{Moscow}
  \state{Moscow region}
  \country{Russian Federation}}
\email{gavkav05@gmail.com}
% \authornotemark[1]
\orcid{0009-0006-6289-9551}
% \email{Rostislav.Gusev@skoltech.ru}

\author{Alexey Zaytsev}
\affiliation{%
  \institution{Applied AI Institute}
  \city{Moscow}
  \state{Moscow region}
  \country{Russian Federation}}
\email{likzet@gmail.com}

% \affiliation{%
%   \institution{Skolkovo Institute of Science and Technology}
%   \city{Moscow}
%   \country{Russia}}
% \email{A.Zaytsev@skoltech.ru}

%%
%% By default, the full list of authors will be used in the page
%% headers. Often, this list is too long, and will overlap
%% other information printed in the page headers. This command allows
%% the author to define a more concise list
%% of authors' names for this purpose.
% \renewcommand{\shortauthors}{R. Gusev \& A. Zaytsev}
\renewcommand{\shortauthors}{Rostislav Gusev and Alexey Zaytsev}

%%
%% The abstract is a short summary of the work to be presented in the
%% article.
\begin{abstract}
Benchmarks of machine learning models often include many datasets, making evaluation expensive. 
For efficiency, it is preferable to perform evaluations on small, representative datasets instead. 
The selection of such subsets typically relies on heuristics and is rarely analyzed for the robustness of the resulting model rankings. 

We introduce a framework to perform the task of selecting datasets subsets with an evaluation of how different selection strategies preserve the global model rankings. 
Our framework includes bootstrap aggregation, which provides valid confidence intervals, allowing a principled comparison of selection strategies. 
We consider clustering, design criteria (A/D-optimality), random baselines, and greedy farthest-first (FAFI).
For the latter, we derive upper bounds on selection quality in terms of ranking errors as a function of the number of selected datasets.

Empirically, in time series classification (TSC, 112 datasets) and in a supplementary natural language processing benchmark derived from MTEB (57 tasks), several selection strategies improve rank preservation compared with random subsets, including simple FAFI. In contrast, in recommender systems (30 datasets), the improvement of strategies over random selection is small and typically statistically insignificant. 
For TSC, our best-performing strategy achieves a Spearman correlation of 0.95 with the full benchmark model rankings using only five selected datasets. 
Additional experiments indicate that the effectiveness of selection approaches depends on both the quality of dataset representations and the scale of the benchmarking regime.
\end{abstract}

%%
%% The code below is generated by the tool at http://dl.acm.org/ccs.cfm.
%% Please copy and paste the code instead of the example below.
%%
\begin{CCSXML}
<ccs2012>
   <concept>
       <concept_id>10002951.10003227.10003351</concept_id>
       <concept_desc>Information systems~Data mining</concept_desc>
       <concept_significance>500</concept_significance>
       </concept>
   <concept>
       <concept_id>10010147.10010257.10010282.10011304</concept_id>
       <concept_desc>Computing methodologies~Active learning settings</concept_desc>
       <concept_significance>300</concept_significance>
       </concept>
 </ccs2012>
\end{CCSXML}

\ccsdesc[500]{Information systems~Data mining}
\ccsdesc[300]{Computing methodologies~Active learning settings}
%%
%% Keywords. The author(s) should pick words that accurately describe
%% the work being presented. Separate the keywords with commas.
\keywords{Efficient Benchmarking, Dataset Selection, 
Dataset Representations, Meta-features, Model validation}
%% A "teaser" image appears between the author and affiliation
%% information and the body of the document, and typically spans the
%% page.
% \begin{teaserfigure}
%   \includegraphics[width=\textwidth]{sampleteaser}
%   \caption{Seattle Mariners at Spring Training, 2010.}
%   \Description{Enjoying the baseball game from the third-base
%   seats. Ichiro Suzuki preparing to bat.}
%   \label{fig:teaser}
% \end{teaserfigure}

% \received{20 February 2007}
% \received[revised]{12 March 2009}
% \received[accepted]{5 June 2009}

%%
%% This command processes the author and affiliation and title
%% information and builds the first part of the formatted document.
\maketitle

\section{Introduction}

Multi-dataset benchmarks have become the default way to evaluate progress: rather than relying on a single task, new methods are expected to demonstrate robust improvements across collections of datasets. This paradigm is particularly prominent in time series classification (TSC), where community repositories such as UCR/UEA enable large-scale, standardized comparisons \cite{dau2019ucr,bagnall2016greattimeseriesclassification}. At the same time, exhaustive evaluation is increasingly resource-intensive: the number of datasets grows, and modern models (including deep architectures and ensembles) often make exhaustive model-by-model evaluation prohibitively expensive. Consequently, practitioners routinely evaluate on smaller subsets of datasets, sometimes chosen implicitly, manually, or simply by convenience.

This practice raises two challenges. First, conclusions drawn from a small subset can be statistically unstable: comparing methods on only a few datasets rarely supports reliable claims about overall superiority and requires careful multiple-dataset statistical procedures \cite{demsar2006statistical}. Second, subset choice can introduce selection bias: when the benchmark composition is flexible, the apparent leader may be an artifact of the selected tasks rather than a genuine improvement. This issue is closely related to the benchmark lottery effect, where rankings can change substantially under different task selections, especially when many models/configurations are compared \cite{dehghani2021benchmarklottery}.

\begin{figure*}[t]
  \centering
  \includegraphics[width=0.95\textwidth]{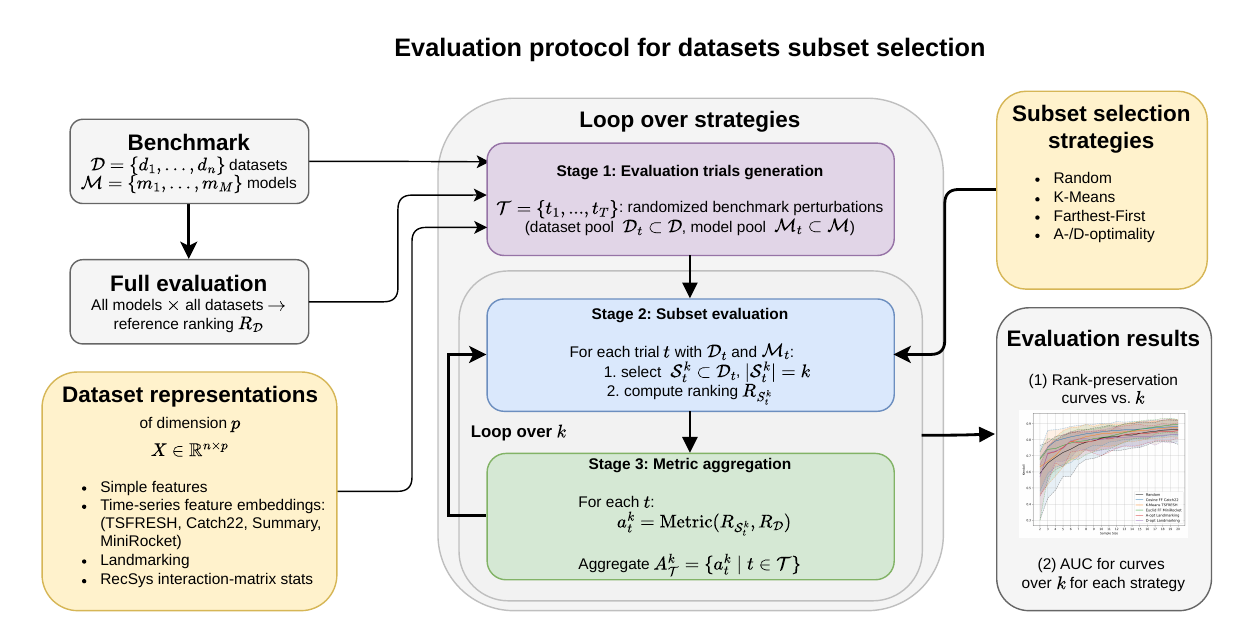}
  \caption{We study how to select a small subset of datasets that best preserves the global model ranking of a large benchmark. Datasets are embedded via task features, subsets are chosen by multiple strategies (geometric, clustering, design-based), and an evaluation protocol with bootstrap aggregation yields rank-preservation metrics, enabling comparisons across subset sizes and domains (time series classification, recommender systems, natural language processing).}
  \label{fig:teaser}
\end{figure*}

These concerns motivate the following question: \textbf{can we select a small but representative subset of datasets that substantially reduces evaluation cost while preserving the conclusions of full-benchmark model comparisons?} In this work, we treat datasets subset selection for benchmarking as a standalone problem. We focus on preserving the global ranking of models induced by the full benchmark and consider selection strategies that rely only on a priori dataset representations (meta-features, time-series feature extractors, landmarking features) rather than on full model evaluation results.

Our contributions are threefold. 
\begin{itemize}
    \item \textbf{A unified evaluation protocol for efficient dataset selection methods.} We propose a unified evaluation protocol for dataset selection methods, based on repeated subsampling of the available pool and bootstrap-style uncertainty estimation over performance-vs.-subset-size curves. 
    \item \textbf{Empirical evaluation for three domains: time series classification, RecSys, and NLP.} Within this protocol, we systematically compare several families of selection strategies --- random baselines, geometric diversity heuristics (e.g., farthest-first), clustering-based selection, and criteria inspired by optimal experimental design --- across multiple dataset representations in two domains (TSC and recommender systems), and additionally test the portability of the protocol on an NLP benchmark derived from MTEB~\cite{muennighoff2023mteb}.
    \item \textbf{Theoretical examination for the farthest-first selection method.} 
    For the farthest-first selection algorithm, we derive theoretical convergence guarantees linking geometric coverage in representation space to statistical accuracy: as the subset size grows, posterior uncertainty over model performance contracts and errors in model ranking decay at rates governed by the covering properties of farthest-first. We also conduct additional experiments and ablations that extend the empirical picture beyond aggregate benchmark results, helping to clarify when and why different selection principles succeed or fail across representations and domains.
\end{itemize}
The code is publicly available.\footnote{The code to reproduce all experiments, including the benchmarking protocol and dataset selection strategies, is available at   \url{https://github.com/hangover137/efficient_benchmarking}.}

\section{Related Work}
Benchmarking across multiple datasets is widely recognized as necessary for making reliable claims about algorithmic superiority \cite{demsar2006statistical,benavoli2015reallyuseposthoctests,bouthillier2021accountingvariancemachinelearning}. In TSC, the UCR/UEA ecosystem and associated bake-off studies have highlighted how evaluation outcomes depend on benchmark composition, protocols, and baseline choices \cite{dau2019ucr,bagnall2016greattimeseriesclassification,keogh2003needtsbenchmarks}. Beyond TSC, the \emph{benchmark lottery} perspective shows that even modest changes in task sets can produce different winners and unstable rankings when many models are compared \cite{dehghani2021benchmarklottery,recht2019imagenetclassifiersgeneralizeimagenet,yadav2019coldcaselostmnist,NEURIPS2019_ee39e503,bowman2021fixbenchmarkingnaturallanguage}. In recommender systems, large-scale benchmarking efforts likewise emphasize dataset heterogeneity and the sensitivity of algorithm rankings to the chosen dataset pool \cite{Shevchenko2024RecSysBenchmarking,Ferrari_Dacrema_2019,chin2022datasetsdilemma,bellogin2017statbias,castells2022offlineeval}. These lines of work collectively motivate studying rank-preserving reductions of benchmark size rather than optimizing for a fixed, small set.

Open benchmarking platforms aim to standardize protocols and improve reproducibility by publishing datasets, splits, and results, in line with the FAIR guiding principles \cite{wilkinson2016fair}. OpenML is a prominent example, enabling systematic large-scale comparisons through shared tasks and run records \cite{vanschoren2014openml}. Curated suites such as OpenML-CC18 further fix dataset selections and partitions to ensure comparability across studies \cite{bischl2021openmlbenchmarkingsuites,wang2020supergluestickierbenchmarkgeneralpurpose,zhai2020largescalestudyrepresentationlearning,thiyagalingam2021scientificmachinelearningbenchmarks}. However, curated suites are typically static: they do not address how to adaptively construct a compact benchmark when the available dataset pool, model zoo, or evaluation objectives change. Our setting complements these efforts by treating benchmark construction itself as an algorithmic selection problem with an explicit rank-preservation objective.

Selecting representative subsets is a classical problem in clustering and geometric approximation. The greedy farthest-first traversal (often viewed through the $k$-center / maximin lens) provides a simple, scalable way to obtain dispersed subsets and comes with approximation guarantees \cite{GONZALEZ1985293}. Similarly, $k$-means clustering and initialization schemes such as $k$-means++ are widely used to obtain representative prototypes in feature space \cite{arthur2007kmeanspp}. These methods are attractive as benchmark selection heuristics when datasets can be embedded into a feature space, but their effectiveness depends on whether feature-space proximity reflects similarity in model behavior—an assumption we examine empirically through our diagnostic analysis.

A different tradition formalizes selection using information-theoretic or variance-reduction criteria. In optimal experimental design, A- and D-optimality aim to choose "design points" that improve estimation accuracy and reduce uncertainty, often through properties of the Fisher information or covariance of estimators \cite{KieferWolfowitz1959OptDesign}. We adopt these principles as dataset-selection strategies, but evaluate them through a benchmarking-specific lens: the ability to preserve global model comparisons rather than to optimize coverage or parameter estimation per se.

Meta-learning and algorithm selection connect dataset characteristics to expected algorithm performance \cite{RICE197665}. This perspective has driven the development of meta-features and fast proxy evaluations (\texttt{landmarking}), as well as standardized repositories of cross-dataset results \cite{vanschoren2014openml,niessl2022overoptimism}. For time series, feature-based dataset characterizations are particularly mature: compact hand-engineered descriptors such as \texttt{Catch22} \cite{lubba2019catch22canonicaltimeseriescharacteristics}, automated extractors such as \texttt{TSFRESH} \cite{CHRIST201872}, and transform-based representations such as \texttt{MiniRocket} \cite{Dempster_2021} provide multiple ways to embed datasets into informative spaces. For recommender systems, analogous dataset-level representations can be derived from the user--item interaction matrix (e.g., size/shape, density and activity statistics, inequality measures such as Gini, and indicators of popularity bias and long-tail effects), enabling systematic comparisons and meta-analysis across heterogeneous datasets \cite{DELDJOO2021102662,chin2022datasetsdilemma,castells2022offlineeval}.

\section{Problem Definition}
\label{sec:problem_definition}

We consider the task of selecting a subset of datasets for comparative analysis, where the goal is to preserve the relative order of models in the domain under consideration, determined by testing on a complete set of datasets.

\paragraph{Benchmark Setting}
\label{sec:bench_sett}

Consider a set of $\dDim$ datasets:
\[
\mathcal{D} = \{d_1, d_2, \dots, d_{\dDim}\},
\]
together with a set of $M$ algorithms:
\[
\mathcal{M} = \{m_1, m_2, \dots, m_M\}.
\]
Each algorithm $ m \in \mathcal{M} $ is evaluated on each dataset $d \in \mathcal{D}$ using a fixed evaluation protocol, such as cross-validation with $F$ folds, to produce models. 
For each model, we have a scalar performance score for each model--dataset--fold triple.
We denote by $s^{(f)}_{m, d} $ the performance of a model produced by an algorithm $m$ on a dataset $d$ across evaluation folds $f \in \{1, \dots, F\}$.

In addition, we assume that for each dataset $ d \in \mathcal{D} $ there exists a vector representation
\begin{equation}
\label{eq:x_p}
\mathbf{x}_d \in \mathbb{R}^{\xDim},
\end{equation}
which characterizes the dataset independently of the evaluated models.
For example, such representations may be derived from dataset meta-features like sample size or dimensionality. 
Further details on specific representations are postponed to Section~\ref{sec:representations}.

\paragraph{True model ranking}

For each dataset $d$ and each evaluation fold $f$, models are ranked according to their fold-specific performance:
\[
r^{(f)}_d(m) =
\operatorname{rank}\bigl(
s^{(f)}_{m,d} : m \in \mathcal{M}
\bigr),
\]
where lower ranks correspond to better performance. We then define the average rank of a model $m$ on a dataset $d$ as the mean of its fold-wise ranks:
\[
\bar{r}_d(m) =
\frac{1}{F}
\sum_{f=1}^{F}
r^{(f)}_d(m).
\]
Finally, the global benchmark ranking induced by the full dataset collection $\mathcal{D}$ is defined as the average of these dataset-level ranks:
\[
R_{\mathcal{D}}(m) =
\frac{1}{\dDim}
\sum_{d \in \mathcal{D}}
\bar{r}_d(m).
\]

The vector
\[
\mathbf{R}_{\mathcal{D}} =
\{ R_{\mathcal{D}}(m) : m \in \mathcal{M} \}
\]
represents the reference ordering of models induced by the full benchmark.
It serves as the target object to be preserved under the dataset subset selection.
It is the main quantity of interest in the paper. Formally, we can go further by inducing a distribution on datasets $p(d)$ and saying that we consider an empirical substitute induced by $\mathcal{D}$. However, we leave this further generalization out of scope for the current paper.

\paragraph{Datasets subset selection}
We consider selecting a subset
\[
\mathcal{S} \subset \mathcal{D}, \qquad |\mathcal{S}| = k \ll |\mathcal{D}|.
\]
Evaluation on $\mathcal{S}$ induces an approximate global ranking
\begin{equation}
R_{\mathcal{S}}(m) = \frac{1}{|\mathcal{S}|} \sum_{d \in \mathcal{S}} \bar{r}_d(m),
\end{equation}
\begin{equation}
\mathbf{R}_{\mathcal{S}} =
\{ R_{\mathcal{S}}(m) : m \in \mathcal{M} \}
\end{equation}

The objective of the dataset subset selection for benchmarking is not to maximize performance on $\mathcal{S}$, but to ensure that $\mathbf{R}_{\mathcal{S}}$ is a faithful proxy of $\mathbf{R}_{\mathcal{D}}$:
\[
\mathbf{R}_{\mathcal{S}} \approx \mathbf{R}_{\mathcal{D}}.
\]

\paragraph{Rank preservation problem statement}
We formalize this problem as a ranking preservation problem.
Given a distance measure $\Delta$ between rankings, we want to select a subset of datasets $\mathcal{S} \subseteq \mathcal{D}$, $|\mathcal{S}| = k$, such that the discrepancy between the ranking obtained from the subset and the ranking obtained from the full sample $\Delta(\mathbf{R}_{\mathcal{S}}, \mathbf{R}_{\mathcal{D}}) $
is minimized:
\[
% \boxed{
\Delta(\mathbf{R}_{\mathcal{S}}, \mathbf{R}_{\mathcal{D}}) \rightarrow \min_{\mathcal{S}, |\mathcal{S}| = k}\!.
% }
\]

In this paper, we focus on rank-based metrics $\Delta(\mathbf{R}_{\mathcal{S}}, \mathbf{R}_{\mathcal{D}})$ that reflect differences in the relative ordering of models. 

To get an idea of how we define them within our notation, let us consider the mean absolute error (MAE), between rankings:
    \[
    \mathrm{MAE}(\mathbf{R}_\mathcal{S}, \mathbf{R}_{\mathcal{D}}) =
    \frac{1}{M} \sum_{j=1}^{M} \left| R_{\mathcal{S}} (j) - R_{\mathcal{D}} (j) \right|.
    \]
This metric quantifies the average positional shift of models when moving
from the full benchmark to the selected subset. 
Lower values indicate better preservation of the global ranking.

Similarly we use for the pair $\mathbf{R}_\mathcal{S}, \mathbf{R}_{\mathcal{D}}$
Spearman’s Rank Correlation $\rho(\mathbf{R}_\mathcal{S}, \mathbf{R}_{\mathcal{D}})$,
Kendall’s Rank Correlation $\tau(\mathbf{R}_\mathcal{S}, \mathbf{R}_{\mathcal{D}})$, Normalized Discounted Cumulative Gain (NDCG@K) with $K = 5$, and Mean Reciprocal Rank (MRR) used in our experiments.
For these four metrics, larger values indicate better preservation of the ranking.

\section{Dataset selection strategies}
\label{sec:selection_strategies}

In this section, we describe dataset subset selection strategies with subsequent theoretical motivation behind the farthest-first strategy.

Let each dataset $d \in \mathcal{D}$ with $\mathcal{D}$ being the full pool of datasets of size $|\mathcal{D}| = n$ be associated with a feature vector $\vecX \in \mathbb {R}^{\xDim}$, and the feature matrix of the entire pool be denoted by
$X \in \mathbb {R}^{n \times p}$, where the $i$-th row corresponds to a vector $\vecX_i^T$. The task is to select a subset $\mathcal{S} \subset \mathcal{D}$ of size $|\mathcal{S}| = k$, as described in Section~\ref{sec:evaluation_methodology}.

Below, we provide an overview of four strategies: random baseline, clustering, farthest-first algorithms with various distance metrics (FAFI, ours), and methods inspired by optimal experimental design (A-/D-optimality). 
In the implementation, feature matrices are standardized before selection for all methods.

\paragraph{Random sampling.}
We use uniform sampling of $k$ datasets without replacement as our baseline algorithm. 
This baseline allows us to interpret whether a geometric/structural strategy outperforms random sampling. 
Formally, from a dataset pool of size $t$, we select the subset $\mathcal{S}_k$ of size $k$ uniformly at random without replacement.

\paragraph{K-Means clustering}
This strategy selects datasets that represent typical regions of the feature space.
We cluster the datasets using k-means in the feature space, then select one representative per cluster, the one closest to the centroid. 
The formal procedure is summarized in Algorithm~\ref{alg:kmeans}.

\begin{algorithm}[t]
\caption{K-Means representative selection}
\label{alg:kmeans}
{\normalfont
\noindent
\begin{algorithmic}[1]
\REQUIRE Feature matrix $X \in \mathbb{R}^{n \times \xDim}$, subset size $k$
\ENSURE Subset $\mathcal{S} \subset \{1, \dots, n\}$, $|\mathcal{S}| = k$
\STATE (optional) standardize feature columns of $X$
\STATE Run $k$-means with $k$ clusters for $X$, obtaining centroids $\{\vecMu_c\}_{c=1}^k$
       and clusters with indexes of points that belong to them $\{\mathcal{C}_c\}_{c=1}^k$
\STATE $\mathcal{S} \gets \emptyset$
\FOR{$c = 1, \dots, k$}
  \STATE $i_c \gets \arg\min_{i \in \mathcal{C}_c} \lVert \vecX_i - \vecMu_c \rVert^2_2$
  \STATE $\mathcal{S} \gets \mathcal{S} \cup \{i_c\}$
\ENDFOR
\RETURN $\mathcal{S}$
\end{algorithmic}
}
\end{algorithm}

\paragraph{Greedy farthest-first selection.}   

To explicitly incentivize diversity and coverage, we use a greedy farthest-first \cite{GONZALEZ1985293} procedure: at each step, we add a dataset that maximizes the distance to the already selected set $\mathcal{S}$ in terms of the distance to the closest selected element. We consider two metrics: cosine and Euclidean. 
The resulting selection procedure is detailed in Algorithm~\ref{alg:farthest_first}.

\begin{algorithm}[t]
\caption{Greedy farthest-first dataset selection}
\label{alg:farthest_first}
{\normalfont
\noindent
\begin{algorithmic}[1]
\REQUIRE Feature matrix $X \in \mathbb{R}^{\dDim \times \xDim}$, subset size $k$, distance metric $d(\cdot,\cdot)$
\ENSURE Subset $\mathcal{S} \subset \{1, \dots, n\}$, $|\mathcal{S}| = k$
\STATE $\bar{\vecX} \gets \frac{1}{n} \sum_{i=1}^{n} \vecX_i$
\STATE $i_0 \gets \arg\max_{i \in \{1, \dots, n\}} d(\vecX_i, \bar{\vecX})$
\STATE $\mathcal{S} \gets \{ i_0 \}$
\WHILE{$|\mathcal{S}| < k$}
  \STATE $j^* \gets \arg\max_{j \notin \mathcal{S}} \min_{i \in \mathcal{S}} d(\vecX_j, \vecX_i)$
  \STATE $\mathcal{S} \gets \mathcal{S} \cup \{ j^{*} \}$
\ENDWHILE
\RETURN $\mathcal{S}$
\end{algorithmic}
}
\end{algorithm}

\paragraph{A-/D-optimality.}
We also consider strategies inspired by optimal experimental design.
The intuition is as follows: if we consider a linear regression model in a feature space, then the Fisher information matrix for a selected subset $\mathcal{S}_t$ is proportional to
\begin{equation}
    I(\mathcal{S}) = X_{\mathcal{S}}^\top X_{\mathcal{S}} + \lambda I,
\end{equation}
where $X_{\mathcal{S}} \in \mathbb{R}^{k \times \xDim}$ is the submatrix of rows corresponding to $\mathcal{S}$, and $\lambda > 0$ is a small regularization for numerical stability.
Then:
\begin{itemize}
\item \emph{A-optimality} minimizes the total variance of the estimates: $\Phi_A(\mathcal{S}) = \mathrm{tr} \big(I(\mathcal{S})^{-1}\big)$.
\item \emph{D-optimality} maximizes the amount of information: $\Phi_D(\mathcal{S}) = \log\det \big(I(\mathcal{S})\big)$.
\end{itemize}
Since exact optimization over all subsets is intractable for large $|\mathcal{D}|$, we use a greedy search that adds the dataset that improves a specific quality metric the most, augmented with a swap check, where we also greedily try to replace one dataset from $\mathcal{S}$ with some other from $\mathcal{D} \setminus \mathcal{S}$.
Versions with and without swaps provided low performance scores in experiments, 
while the swap variant was slightly better, so we report results for it. 

\subsection{Why farthest-first helps: a theoretical motivation.}
\label{sec:theory}

We provide a theoretical justification for cosine farthest-first (Alg. \ref{alg:farthest_first}) by linking coverage in the representation space to (i) the decay of the integrated uncertainty of performance estimates and (ii) the decay of ranking errors.  
Each dataset is represented by a feature vector $\vecX \in \mathbb{R}^{\xDim}$ and an associated positive semi-definite kernel $k(\cdot, \cdot)$ with $k(\vecX, \vecX) = 1$.
Let us consider a performance function $f: \mathcal{D} \to \mathbb{R}$, such that $f \sim \mathrm{GP}(0, k(\cdot, \cdot) + \sigma^2 \delta (\vecX - \vecX'))$, a realization of a Gaussian process with zero mean and covariance $k(\cdot, \cdot)$ with observation noise variance $\sigma^2$. 
We use the GP as a standard smooth surrogate prior over the representation space, as in Bayesian optimization for expensive ML performance functions~\cite{snoek2012practical} and GP-based NAS~\cite{zhang2020gpnas}.

We define the kernel-based dissimilarity
\begin{equation}
	d_k(\vecX, \vecX') = 2 \left(1 - k(\vecX, \vecX')\right).
\end{equation}
$\mathcal{D} = \{\vecX_{(i)}\}_{i = 1}^n$ constitutes an available finite pool of datasets.
Given the set of pairwise dissimilarities, we define the farthest-first procedure (FAFI) that 
produces a sequence $\vecX_1, \vecX_2, \dots$.
It starts from $\mathcal{S}_1 = \{\vecX\}$ such that $\vecX = \arg\max_{\vecX \in D} \frac{1}{n} \sum_{i} d_k(\vecX, \vecX_{i}).$ 
At each step, we add the furthest $\vecX_t$ among the remaining:
\begin{equation}
    \label{eq:fufi}
    \vecX_t  = \arg\max_{\vecX \in \mathcal{D}} d_k\bigl(\vecX, \mathcal{S}_{t - 1}\bigr), 
\end{equation}
with $\mathcal{S}_{t - 1} = \{\vecX_1, \dots, \vecX_{t - 1}\}$ and 
$d_k(\vecX, \mathcal{S}) = \min_{\vecX' \in \mathcal{S}} d_k(\vecX, \vecX')$.

Given $t$ selected points $\mathcal{S}_{t}$ and associated function values $\mathbf{f}_{t} = \{f(\vecX_i)\}_{i = 1}^{t}$, let $\mu_{t}(\vecX)$ and $\sigma^2_{t}(\vecX)$ be the GP posterior mean $\mathbb{E}( f(\vecX) | \mathbf{f}_{t})$ and variance $\mathbb{V}( f(\vecX) | \mathbf{f}_{t})$ at $\vecX$.
We are interested in the normalized mean squared error (MSE) of our predictions for the full sample $\mathcal{D}$:
\begin{equation}
E_t := \frac{1}{n} \sum_{\vecX \in \mathcal{D}}
\left(f(\vecX) - \mu_{t - 1}(\vecX)\right)^2.
\end{equation}
The estimation of $E_t$ reflects how our predictions $\mu_{t - 1}(\vecX)$ deviate from the true values $f(\vecX)$ on average. 
Combining a concentration inequality with geometric bounds on the covering radius obtained by furthest-first selection to obtain an upper bound for $E_t$, we obtain the quality for FAFI.

\begin{theorem}[MSE decay for FAFI]
\label{th:imse}
Let $\delta \in (0, 1)$ and
\begin{equation}
	\beta_t = 2 \ln \frac{\pi^2 n t^2}{6 \delta}.
\end{equation}
We run Algorithm~\ref{alg:farthest_first} for a sample function $f \sim \mathrm{GP}(0, k(\cdot, \cdot) + \sigma^2 \delta (\vecX - \vecX'))$. 
Then, for it with the probability at least $1 - \delta$ for all $2 \le t \le n$ it holds that
\begin{equation}	
	E_t \le \beta_t \left(
	C \frac{2\Delta}{(t - 1)^{1 / \xDim} - 1}
	 + \sigma^2
	\right),
\end{equation}
where $\Delta := \mathrm{diam}(\mathcal{D}) = \max_{\vecX, \vecX' \in \mathcal{D}} \|\vecX - \vecX' \|_2$, and $C > 0$ is a constant independent of all other parameters. 
\end{theorem}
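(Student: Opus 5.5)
The proof splits $E_t$ into a probabilistic part and a deterministic geometric part. We handle the first with a GP concentration bound in the style of GP-UCB, and the second with the covering property of farthest-first traversal, converted into a posterior-variance bound through the kernel.

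\textbf{Step 1 (concentration).} Conditionally on $\mathbf{f}_{t-1}$, each $f(\vecX)-\mu_{t-1}(\vecX)$ is Gaussian with variance $\sigma_{t-1}^2(\vecX)$. This holds because the selected points depend only on the features, not on $f$, so the design $\mathcal{S}_{t-1}$ is deterministic. The Gaussian tail bound $P(|Z|>\sqrt{\beta}\,s)\le e^{-\beta/2}$ gives the following: with $\beta_t=2\ln(\pi^2 n t^2/(6\delta))$, a union bound over the $n$ pool points and over $t\ge 2$, using $\sum_t 1/t^2\le \pi^2/6$, yields, with probability at least $1-\delta$, simultaneously for all $t$ and all $\vecX\in\mathcal{D}$,
\[
(f(\vecX)-\mu_{t-1}(\vecX))^2\le \beta_t\,\sigma_{t-1}^2(\vecX).
\]
Averaging over $\mathcal{D}$ gives $E_t\le \beta_t\,\frac1n\sum_{\vecX}\sigma^2_{t-1}(\vecX)\le\beta_t\max_{\vecX\in\mathcal{D}}\sigma^2_{t-1}(\vecX)$. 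The $\sigma^2$ term in the final bound comes from the noise: the predictive variance of a noisy observation at $\vecX$ is the latent variance plus $\sigma^2$.

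\textbf{Step 2 (variance via nearest selected point).} For any $\vecX$, let $\vecX'\in\mathcal{S}_{t-1}$ be its nearest selected point. The posterior variance can only decrease when more data are conditioned on, so we may condition only on the single noisy observation at $\vecX'$. This gives
\[
\sigma^2_{t-1}(\vecX)\le 1-\frac{k(\vecX,\vecX')^2}{1+\sigma^2}+\sigma^2 .
\]
If we treat the noise as absorbed into the additive $\sigma^2$ term, this is at most $1-k(\vecX,\vecX')^2\le 2(1-k(\vecX,\vecX'))=d_k(\vecX,\vecX')$. Hence
\[
\max_{\vecX}\sigma^2_{t-1}(\vecX)\le \max_{\vecX} d_k(\vecX,\mathcal{S}_{t-1})+\sigma^2,
\]
that is, the covering radius $r_{t-1}$ of FAFI in $d_k$, plus $\sigma^2$. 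We then assume the kernel is Lipschitz in the Euclidean distance, $d_k(\vecX,\vecX')\le C\|\vecX-\vecX'\|_2$ (true for standard stationary kernels on a bounded set), which reduces everything to the Euclidean covering radius.

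\textbf{Step 3 (covering radius of farthest-first).} This is the main obstacle. The $2\Delta/((t-1)^{1/p}-1)$ form suggests a packing argument combined with Gonzalez's 2-approximation. After $t-1$ farthest-first steps with current radius $r$, the selected points, together with the current farthest point, are pairwise at distance at least $r$. The balls of radius $r/2$ around them are disjoint and contained in a ball of radius $\Delta+r/2$. Comparing volumes gives
\[
(t-1)(r/2)^p\le(\Delta+r/2)^p,
\]
so $(t-1)^{1/p}\le 2\Delta/r+1$, i.e. $r\le 2\Delta/((t-1)^{1/p}-1)$. The delicate points are the following.
\begin{itemize}
\item The monotonicity of the farthest-first distances must be established in $d_k$ while the packing is done in Euclidean space. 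I would run the packing argument directly in the metric in which FAFI operates and then transfer, assuming the cosine kernel metric is comparable to the Euclidean one on normalized features; this comparability is absorbed into $C$.
\item The bound requires $t-1\ge 2$ for the denominator to be positive, so the degenerate small-$t$ cases (including $t=2$) must be handled, for example by noting that the bound is vacuous or trivially true there.
\end{itemize}
Combining Steps 1–3 gives
\[
E_t\le\beta_t\left(C\,\frac{2\Delta}{(t-1)^{1/p}-1}+\sigma^2\right).
\]
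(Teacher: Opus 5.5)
Your proposal is correct and follows essentially the paper's route. It combines the union-bound concentration of Lemma~\ref{lem:concentration}, giving $E_t\le\beta_t\frac1n\sum_{\vecX}\sigma^2_{t-1}(\vecX)$, with a single-selected-point bound $\sigma^2_{t-1}(\vecX)\le 2(1-m_{t-1}(\vecX))+\sigma^2$; the paper evaluates the quadratic $F$ at the unit weight $e_j$, while you use the optimal single-point weight, which is the same idea. It then applies the same volume-packing bound on the farthest-first covering radius, transferred through the Lipschitz constant $C$. Your remarks on the vacuous $t=2$ case and on the $d_k$-versus-Euclidean mismatch address points the paper glosses over or handles only in its cosine specialization.

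The one slip is the noise bookkeeping in Step~2. Take $\sigma^2_{t-1}$ to be the latent posterior variance $1-\vecK_{t-1}(\vecX)^\top(K_{t-1}+\sigma^2 I)^{-1}\vecK_{t-1}(\vecX)$, as in the paper, and drop the extra $+\sigma^2$ in your display. Then $1-\frac{k(\vecX,\vecX')^2}{1+\sigma^2}\le 1-k(\vecX,\vecX')^2+\sigma^2\le d_k(\vecX,\vecX')+\sigma^2$ holds directly because $k(\vecX,\vecX')^2\le 1$, so no ``absorption'' of the noise is needed. Keeping your extra term would give $2\sigma^2$ instead of $\sigma^2$.
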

The theorem provides a bound for the procedure that is independent of $f(\vecX)$ and its observed values.

To connect uncertainty reduction to \emph{rank preservation}, consider two models with independent performance functions \(f_1, f_2 \sim {}\allowbreak{} \mathrm{GP}(0, k(\cdot, \cdot) + \sigma^2 \delta (\vecX - \vecX'))\). 
We define the pointwise gap
\begin{equation}
g(\vecX) := f_1(\vecX) - f_2(\vecX).
\end{equation}
Let $\hat f_{1, t},\hat f_{2, t}$ be the GP posterior means constructed after selecting $t$ points by Algorithm~\ref{alg:farthest_first}, and define $\mathrm{Err}_t$ as the fraction of sign errors for $\hat{g}_{t}(\vecX) = \hat f_{1, t}(\vecX) - \hat f_{2, t}(\vecX)$ if compared with $g(\vecX)$: there the estimated ordering disagrees with the true one.
The posterior variance $\sigma^2_{t}(\vecX)$ coincides for both posteriors if the kernel and the design points are the same. 
Then the following theorem holds.

\begin{theorem}[Expected ranking error bound]
\label{th:rank_error}
% (denoted $\mathcal{D}_{t-1}$),
Conditionally on the observations up to step $t - 1$: 
\begin{align*}
\mathbb{E}\!\left[\mathrm{Err}_t \mid \mathcal{D}_{t-1}\right]
 &= \frac{1}{n} \sum_{x \in \mathcal{D}}
 \mathbb{P}\!\left( g(x)\, \hat{g}_{t - 1}(x) < 0 \mid \mathcal{D}_{t-1}\right) \\
 &\le \frac{1}{n} \sum_{x \in \mathcal{D}}
\exp \left(-\frac{|\hat g_{t-1}(x)|^2}{4 \sigma^2_{t - 1}(x)}\right).
\end{align*}
Moreover, if $\sigma^2_{t-1}(x) \le S_t$ for all $x \in \mathcal{D}$, then
\begin{equation}
\mathbb{E}\!\left[\mathrm{Err}_t \mid \mathcal{D}_{t-1}\right]
\le \exp \left(-\frac{|\hat g_{\mathrm{abs}\min}^t|^2}{4 S_t}\right).
\end{equation}
with $S_t = C \frac{2 \Delta}{(t - 1)^{1/d} - 1}
+\sigma^2$, $\hat g_{\mathrm{abs}\min}^t := \min_{x \in \mathcal{D}} |\hat g_{t-1}(x)|$.

\end{theorem}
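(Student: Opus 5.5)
The plan is to work pointwise: derive the conditional law of the gap $g(x)$ given the first $t-1$ observations, bound each sign-error probability with a Gaussian tail bound, and then average over $\mathcal{D}$.

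\emph{Step 1: the equality.} By definition, $\mathrm{Err}_t = \frac{1}{n}\sum_{x\in\mathcal{D}} \mathbf{1}\{g(x)\hat g_{t-1}(x)<0\}$. Taking the conditional expectation given $\mathcal{D}_{t-1}$ and using linearity turns each indicator into a conditional probability, which is the first equality.

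\emph{Step 2: the conditional law of $g(x)$.} Since $f_1$ and $f_2$ are independent GPs sharing the kernel and the design points $\mathcal{S}_{t-1}$, their posteriors at $x$ are independent Gaussians with means $\hat f_{i,t-1}(x)$ and common variance $\sigma^2_{t-1}(x)$. Hence
\[
g(x)\mid\mathcal{D}_{t-1} \sim \mathcal{N}\bigl(\hat g_{t-1}(x),\, 2\sigma^2_{t-1}(x)\bigr).
\]
Note that $\hat g_{t-1}(x)$ is measurable with respect to $\mathcal{D}_{t-1}$, so it is a constant under this conditioning.

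\emph{Step 3: the pointwise tail bound.} If $\hat g_{t-1}(x)>0$, the event $\{g(x)\hat g_{t-1}(x)<0\}$ is $\{g(x)<0\}$, i.e.\ a deviation of at least $|\hat g_{t-1}(x)|$ below the mean; the case $\hat g_{t-1}(x)<0$ is symmetric. The Chernoff bound $\mathbb{P}(Z\ge a)\le e^{-a^2/(2v)}$ for $Z\sim\mathcal{N}(0,v)$, applied with $v=2\sigma^2_{t-1}(x)$, gives the factor $\exp\bigl(-|\hat g_{t-1}(x)|^2/(4\sigma^2_{t-1}(x))\bigr)$. The degenerate case $\hat g_{t-1}(x)=0$ needs a convention, since the exponential bound is then $1$, which is trivially valid. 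Also, if the noise $\sigma^2$ is meant to be included, one should check whether $\sigma^2_{t-1}$ denotes the predictive variance of $f$ or of noisy $f$; I would state that $\sigma^2_{t-1}(x)$ is the variance of whatever quantity $g$ refers to, and the argument is unchanged. Averaging over $x$ yields the first inequality.

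\emph{Step 4: the uniform bound.} Replace $\sigma^2_{t-1}(x)$ by its upper bound $S_t$ and $|\hat g_{t-1}(x)|$ by its lower bound $\hat g^t_{\mathrm{abs}\min}$. The exponent is monotone decreasing in $|\hat g|$ and increasing in $\sigma^2$, so every summand is at most $\exp\bigl(-|\hat g^t_{\mathrm{abs}\min}|^2/(4S_t)\bigr)$, and the average over $\mathcal{D}$ is bounded by the same quantity.

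\emph{Justifying $S_t$.} The specific form of $S_t$ comes from the covering argument underlying Theorem~\ref{th:imse}. Farthest-first yields a fill distance $h_{t-1}\le 2\Delta/((t-1)^{1/\xDim}-1)$ via a packing argument: the selected points are pairwise separated by at least the current radius, so that radius is bounded by the diameter divided by roughly $(t-1)^{1/\xDim}$. For a Lipschitz-type kernel with $k(x,x)=1$, the posterior variance at $x$ is at most the variance conditioned on its nearest design point alone, which is $\le C\,d_k(x,\mathcal{S}_{t-1})+\sigma^2$, and this is bounded by $C h_{t-1}+\sigma^2$. The statement writes the exponent as $1/d$; I read $d$ as the dimension $\xDim$.

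The main obstacle is this variance bound, specifically the link between the kernel dissimilarity $d_k$ and Euclidean distance that the constant $C$ absorbs; the tail-bound part of the argument is routine.
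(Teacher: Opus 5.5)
Your proposal is correct and follows the paper's proof. In both, the gap has conditional law $\mathcal{N}(\hat g_{t-1}(x), 2\sigma^2_{t-1}(x))$, a Gaussian tail bound is applied at each point and averaged over $\mathcal{D}$, and the uniform bound comes from substituting $S_t$ and $\hat g^t_{\mathrm{abs}\min}$. The only difference is minor: you bound the one-sided error event directly with the Chernoff bound, whereas the paper passes to the two-sided event $|e(x)|>|\hat g_{t-1}(x)|$ and uses $F(-a)\le\tfrac12 e^{-a^2/2}$, reaching the same exponent; the paper states that passage as an equivalence, but the two-sided event only contains the error event.
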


\paragraph{Theoretical results summary}
These statements formalize the intuition that farthest-first is beneficial when the chosen representation induces a geometry in which model performance varies smoothly: the % algorithm explicitly controls the covering radius (hence uncertainty) and 
yields a decreasing with $t$ upper bound on MSE and ranking errors as $t$ grows. 
In contrast, uniform random sampling does not control the covering radius in the worst case and therefore does not admit an analogous coverage-driven guarantee without additional assumptions on the distribution of datasets.
The second theorem also provides a natural way to quantify the ranking quality given the subset size and the minimal gap in model performance.

\section{Benchmarking Protocol}
\label{sec:bench_prot}

In this section, we describe an empirical evaluation protocol used to compare datasets' subset selection strategies in terms of preserving the global ranking of models formally defined in the previous section and the representation of datasets in the considered domains.

\subsection{Evaluation methodology}
\label{sec:evaluation_methodology}

We adopt a unified and reusable benchmarking protocol designed to evaluate
datasets subset selection strategies with respect to their ability to preserve
the model ranking induced by the full benchmark.
The protocol explicitly accounts for variability in dataset and model availability
and provides statistically robust estimates of rank-preservation performance for each \emph{(strategy, representation)} pair under variations of
(i) the subset size $k$,
(ii) the available dataset pool,
and (iii) the available model pool.

The evaluation procedure is organized into three stages.

\paragraph{Stage 1: Generation of evaluation trials.}
To assess robustness, we generate a set of $T$ independent evaluation trials
\begin{equation}
\mathcal{T} = \{t_1, t_2, \ldots, t_T\},
\end{equation}
where each trial corresponds to a randomized perturbation of the benchmarking setup.
We consider two complementary trial-generation scenarios:

\begin{itemize}
  \item \textbf{Dataset pool variation:} each trial $t$ consists of a randomly sampled subset
  $\mathcal{D}_t \subset \mathcal{D}$ with $|\mathcal{D}_t| = [\alpha D]$, sampled without replacement.
  \item \textbf{Model pool variation:} each trial $t$ consists of the full dataset collection
  $\mathcal{D}$ and a randomly sampled subset of models
  $\mathcal{M}_t \subset \mathcal{M}$ with $|\mathcal{M}_t| = [\alpha M]$, sampled without replacement.
\end{itemize}

The subsampling ratio $\alpha \in (0,1)$ is fixed in advance.

\paragraph{Stage 2: Subset selection and ranking construction.}
For each trial $t \in \mathcal{T}$ and a fixed subset size $k$, the considered selection strategy produces a datasets subset
$$
\mathcal{S}_t \subseteq \mathcal{D}_t \quad \text{or} \quad \mathcal{S}_t \subseteq \mathcal{D},
\qquad |\mathcal{S}_t| = k,
$$
depending on the evaluation scenario. Using the selected subset $\mathcal{S}_t$, we compute the vector of average model ranks $\mathbf{R}_{\mathcal{S}_t},$ as defined in Section~\ref{sec:problem_definition}. This vector represents the ranking induced by evaluating models only on the selected datasets.

As a reference, we always use the ranking induced by the full benchmark.
In the dataset pool variation scenario, the target ranking is given by $\mathbf{R}_{\mathcal{D}},$ computed over all datasets and models.
In the model pool variation scenario, the reference ranking is obtained by restricting
$\mathbf{R}_{\mathcal{D}}$ to the sampled model subset $\mathcal{M}_t$, with ranks
renormalized to the range $1, \ldots, |\mathcal{M}_t|$.

\paragraph{Stage 3: Metric computation and aggregation.}
For each trial $t$, we compute a ranking discrepancy metric between the subset-induced ranking and the corresponding reference ranking:
\begin{equation}
a_t =
\operatorname{Metric}\bigl(
\mathbf{R}_{\mathcal{S}_t},
\mathbf{R}_{\mathcal{D}}
\bigr).
\end{equation}

This yields a set of metric values
\begin{equation}
\mathcal{A}_T
= \bigl\{ a_t \mid t \in \mathcal{T} \bigr\}.
\end{equation}
We report the mean performance
\begin{equation}
\bar{A}_T = \frac{1}{T} \sum_{t \in \mathcal{T}} a_t,
\end{equation}
along with non-parametric confidence intervals computed using empirical quantiles.
Let $F^{-1}_{\mathcal{A}_T}(p)$ denote the $p$-quantile of $\mathcal{A}_T$.
Then a $(1-\beta)$ confidence interval is given by
\begin{equation}
\bigl[
F^{-1}_{\mathcal{A}_T}(\beta/2),
F^{-1}_{\mathcal{A}_T}(1-\beta/2)
\bigr].
\end{equation}

Repeating this procedure for all considered subset sizes $k$ and all
\emph{(selection strategy, dataset representation)} pairs yields a comprehensive and robust evaluation of ranking preservation performance.

\subsection{Dataset Representations}
\label{sec:representations}

Our subset selection strategies operate on a dataset-level embedding
$\vecX \in \mathbb{R}^p$ that characterizes each dataset independently of the evaluated benchmark models and is used solely for selecting a subset of datasets (cf. Eq. \ref{eq:x_p}). 
We focus on prior representations that do not require access to the benchmark evaluation outcomes. 
Concretely, we consider two families: (i) a priori meta-features computed directly from the dataset content, and (ii) landmarking/probe-based descriptors computed by evaluating a small, fixed set of lightweight probe learners that are not part of the benchmark model pool.

\paragraph{Time Series Classification (TSC)} In this domain, we use six dataset representation families:
\begin{itemize}
    \item \textbf{Simple Features.} 
    A low-dimensional feature description reflecting the basic statistics of a dataset: dataset size, class imbalance (Gini index and entropy), and a binary or multiclass classification indicator. These features provide a coarse representation.
    
    \item \textbf{TSFRESH / Catch22 / Summary / MiniRocket embeddings.} 
    For each dataset $d_i$ containing $n_i$ time series, we compute per-series feature vectors $\textbf{f}_{ij} \in \mathbb{R}^{p}$ and aggregate them into a dataset embedding via the mean:
    $$
    \textbf{x}_i = \frac{1}{n_i}\sum_{j=1}^{n_i} \textbf{f}_{ij}.
    $$

    \item \textbf{Landmarking representation.}
    Besides hand-crafted or extracted time-series features, we also use landmarking representations. The idea is simple: we characterize a dataset by how a small set of very cheap probe learners performs on it. These probes are not part of the benchmark model pool and are only used to obtain a lightweight signature of dataset difficulty and structure. Concretely, we run a fixed set of probes (decision stump, 1-NN, a linear classifier, and a shallow decision tree) using the same evaluation protocol as for the benchmark (cross-validation). From their results, we build a compact dataset-level vector that includes: (i) mean CV error of each probe, (ii) a simple generalization-gap signal (train--test gap for the linear probe), and (iii) stability indicators such as error and rank variability estimated via CV and bootstrap (e.g., variance of the probe errors and variance of the induced probe ranks).
    
\end{itemize}

\paragraph{Recommender Systems.}
For RecSys benchmarks, we represent each dataset using only properties of the user--item interaction matrix $R \in \mathbb{R}^{|U|\times|I|}$: size/shape measures (e.g., $|U|$, $|I|$, $|R|$, $|U|\cdot|I|$, $|U|/|I|$), sparsity/activity measures (density, ratings per user/item), inequality statistics (Gini), and distributional descriptors capturing popularity bias and long-tail effects, as described in \cite{DELDJOO2021102662}, and the extracted feature descriptions themselves were taken from the results reported in \cite{Shevchenko2024RecSysBenchmarking}. 
This yields an 18-dimensional prior representation for each dataset.

\paragraph{Natural Language Processing.}
We consider two lightweight, no-leakage dataset representations for NLP.
First, we construct a short structured natural-language description for
each dataset using a fixed template covering task family, input format,
retrieval or classification objective label, text granularity, language setup, domain, label or relevance structure, and special properties such as lexical mismatch or noisy text. 
These descriptions are encoded with
\texttt{bert-base-uncased}: we tokenize each description with maximum
length 256, mean-pool final-layer token embeddings over non-padding
tokens, and $\ell_2$-normalize the resulting 768-dimensional vector.
Second, we use simple structured features from the MTEB metadata
reported in Table~2 of~\cite{muennighoff2023mteb}, including
task/category, number of languages, train/dev/test sizes, and average
text lengths; categorical attributes are one-hot encoded and numerical
attributes are rescaled.

\begin{table*}[t]
  \centering
  \caption{\textbf{AUC aggregates over subset sizes $k=2,\dots,20$ for three domains.}
  We report mean $\pm$ standard deviation of AUC(MAE) $\downarrow$, AUC($\rho$) $\uparrow$, AUC(NDCG@5) $\uparrow$, AUC($\tau$)$\uparrow$, and AUC(MRR)$\uparrow$.
  Each strategy is shown with its best-performing dataset representation.}
  \label{tab:auc_combined}
  \begin{tabular}{lccccc}
    \toprule
    % Strategy &
     &
    AUC(MAE)$\downarrow$ &
    AUC($\rho$)$\uparrow$ &
    AUC(NDCG@5)$\uparrow$ &
    AUC($\tau$)$\uparrow$ & 
    AUC(MRR)$\uparrow$\\
    \midrule

    \multicolumn{5}{l}{\textbf{TSC}} \\
    \midrule
    Random 
    & 31.06 $\pm$ 1.73 
    & 16.61 $\pm$ 0.19 
    & 17.58 $\pm$0.09 
    & 14.24 $\pm$0.25
    & 11.39 $\pm$ 1.40\\
    Cosine Farthest-First + \texttt{Catch22}   
    & 27.07 $\pm$ 2.99 
    & \textbf{17.18} $\pm$ 0.33 
    & \underline{17.67} $\pm$ 0.11 
    & \textbf{15.11} $\pm$ 0.54
    & $11.21 \pm 3.36$ \\
    Euclidean Farthest-First + \texttt{MiniRocket}
    & \textbf{25.68} $\pm$ 4.04 
    & \underline{16.91} $\pm$ 0.36 
    & \textbf{17.70} $\pm$ 0.14 
    & \underline{14.77} $\pm$ 0.54 
    & $\mathbf{13.65} \pm 3.56$\\
    K-Means + \texttt{TSFRESH}
    & \underline{26.67} $\pm$ 2.79 
    & 16.82 $\pm$ 0.19 
    & 17.62 $\pm$ 0.09 
    & 14.66 $\pm$ 0.29
    & \underline{13.00} $\pm$ 2.31\\
    A-optimality + \texttt{Landmarking}
    & 28.12 $\pm$ 1.72 
    & 16.75 $\pm$0.16 
    & 17.52 $\pm$ 0.08 
    & 14.38 $\pm$ 0.26
    & $11.61 \pm 1.66$\\
    D-optimality + \texttt{Landmarking}
    & 28.93 $\pm$ 1.45 
    & 16.71 $\pm$0.15 
    & 17.55 $\pm$ 0.08
    & 14.28 $\pm$ 0.24
    & $12.56 \pm 1.67$\\

    \midrule
    \multicolumn{5}{l}{\textbf{RecSys}} \\
    \midrule
    Random
    & 9.52 $\pm$ 0.64 
    & \textbf{14.88} $\pm$ 0.66 
    & \underline{17.54} $\pm$ 0.12 
    & \underline{12.94} $\pm$ 0.72 
    & $16.80 \pm 0.80$\\
    Cosine Farthest-First
    & 8.92 $\pm$ 1.06 
    & \underline{14.84} $\pm$ 1.08 
    & 17.46 $\pm$ 0.19 
    & \textbf{13.01} $\pm$ 1.13 
    & $17.10 \pm 1.08$ \\
    Euclidean Farthest-First
    & 9.09 $\pm$ 1.14 
    & 14.40 $\pm$ 0.88 
    & 17.48 $\pm$ 0.15 
    & 12.57 $\pm$ 0.96 
    & $16.85 \pm 0.56$\\
    K-Means
    & \textbf{8.67} $\pm$ 0.91 
    & 14.61 $\pm$ 0.85 
    & \textbf{17.55} $\pm$ 0.13 
    & 12.71 $\pm$ 0.94 
    & $\mathbf{17.49} \pm 0.59$\\
    A-optimality
    & 8.88 $\pm$ 0.97 
    & 14.54 $\pm$ 0.78 
    & 17.51 $\pm$ 0.15 
    & 12.58 $\pm$ 0.87 
    & $17.39 \pm 0.59$ \\
    D-optimality
    & \underline{8.83} $\pm$ 0.99 
    & 14.51 $\pm$ 0.79 
    & 17.52 $\pm$ 0.14 
    & 12.56 $\pm$ 0.87 
    & \underline{17.43} $\pm$ 0.60\\

    \midrule
    \multicolumn{5}{l}{\textbf{NLP}} \\
    \midrule
    Random
    & 26.50 $\pm$ 2.06
    & 16.75 $\pm$ 0.20
    & 17.80 $\pm$ 0.05
    & 14.64 $\pm$ 0.29
    & \underline{11.51} $\pm$ 1.88 \\
    Cosine Farthest-First + \texttt{BERT-features}
    & 22.41 $\pm$ 2.56
    & \textbf{17.20} $\pm$ 0.24
    & \underline{17.89} $\pm$ 0.03
    & \textbf{15.33} $\pm$ 0.41
    & 7.88 $\pm$ 2.92 \\
    Euclidean Farthest-First + \texttt{BERT-features}
    & \underline{21.49} $\pm$ 3.28
    & \underline{17.17} $\pm$ 0.23
    & \underline{17.89} $\pm$ 0.04
    & \underline{15.17} $\pm$ 0.46
    & 8.60 $\pm$ 3.03 \\
    K-Means + \texttt{BERT-features}
    & \textbf{21.05} $\pm$ 2.18
    & 16.93 $\pm$ 0.27
    & \textbf{17.91} $\pm$ 0.03
    & 15.05 $\pm$ 0.40
    & \textbf{12.38} $\pm$ 2.16 \\
    A-optimality + \texttt{Simple Features}
    & 26.27 $\pm$ 3.02
    & 16.61 $\pm$ 0.27
    & \underline{17.86} $\pm$ 0.04
    & 14.22 $\pm$ 0.44
    & 5.47 $\pm$ 2.44 \\
    D-optimality + \texttt{Simple Features}
    & 27.11 $\pm$ 3.37
    & 16.51 $\pm$ 0.31
    & 17.87 $\pm$ 0.04
    & 14.09 $\pm$ 0.46
    & 4.86 $\pm$ 2.17 \\
    
    \bottomrule
  \end{tabular}
\end{table*}

\begin{figure}[t]
    \centering
    \footnotesize

    \begin{minipage}[t]{0.48\linewidth}
        \centering
        \includegraphics[width=\linewidth]{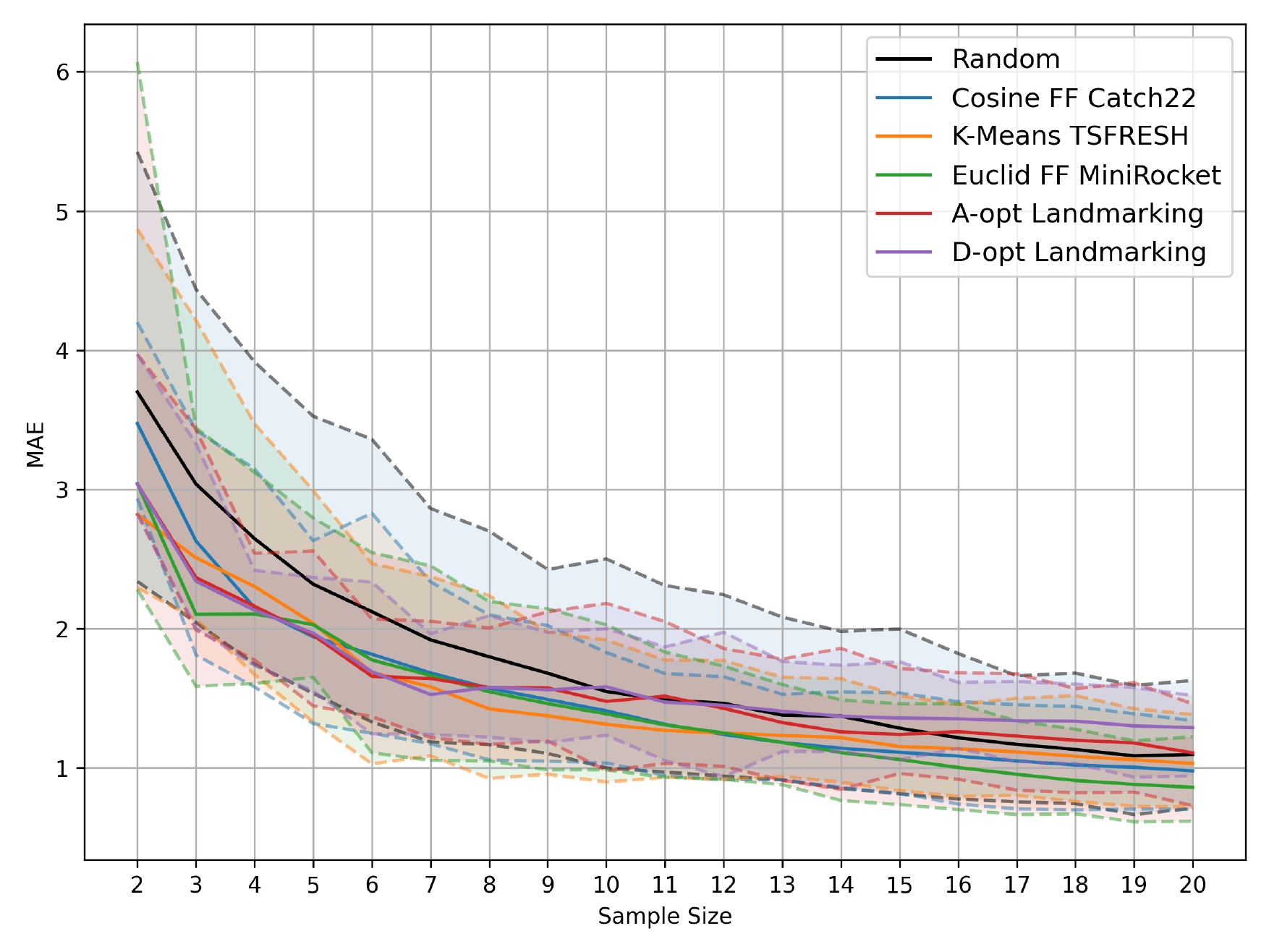}
        \caption*{TSC: MAE $\downarrow$}
    \end{minipage}\hfill
    \begin{minipage}[t]{0.48\linewidth}
        \centering
        \includegraphics[width=\linewidth]{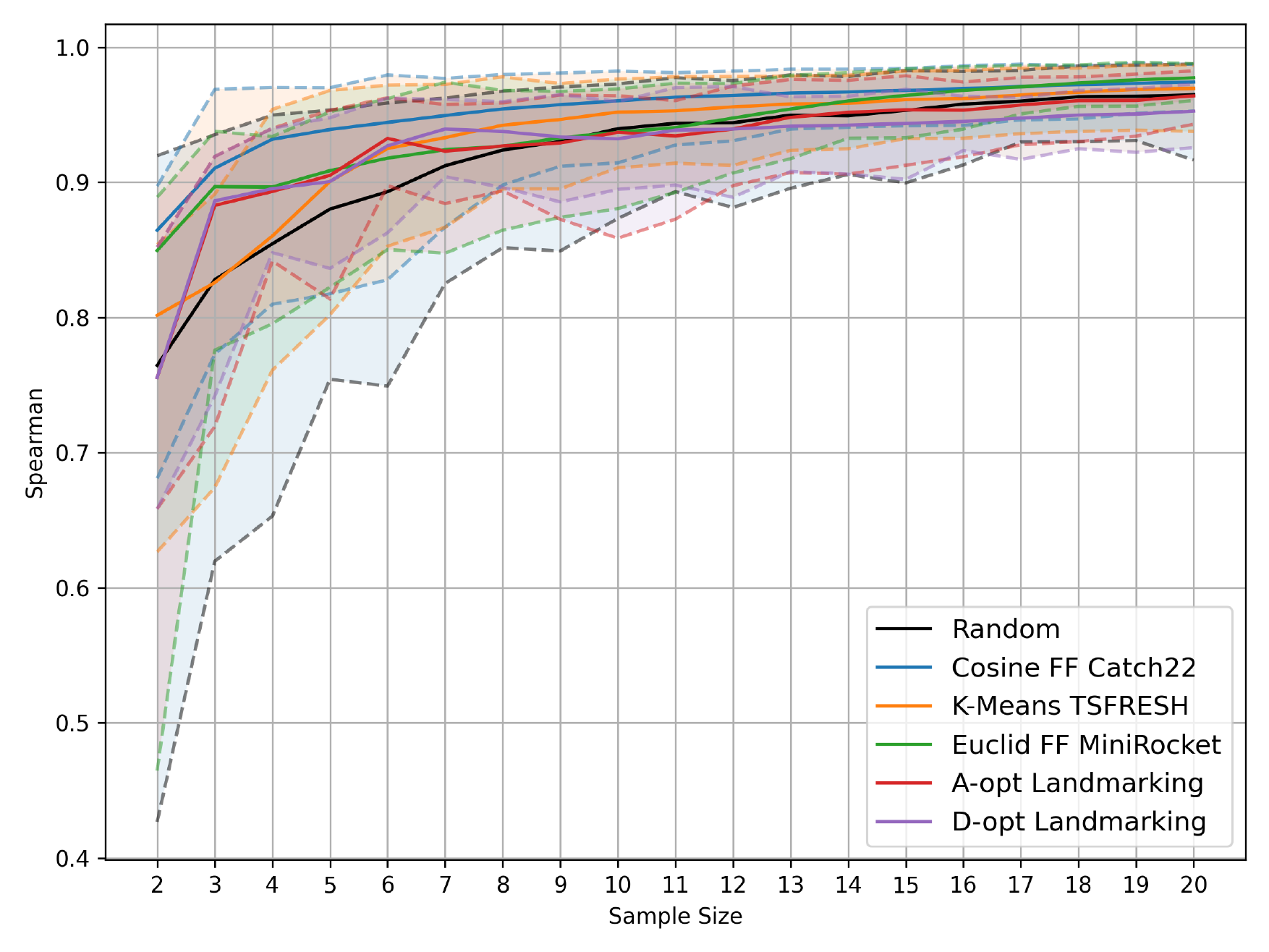}
        \caption*{TSC: Spearman $\uparrow$}
    \end{minipage}

    \vspace{0.35em}

    \begin{minipage}[t]{0.48\linewidth}
        \centering
        \includegraphics[width=\linewidth]{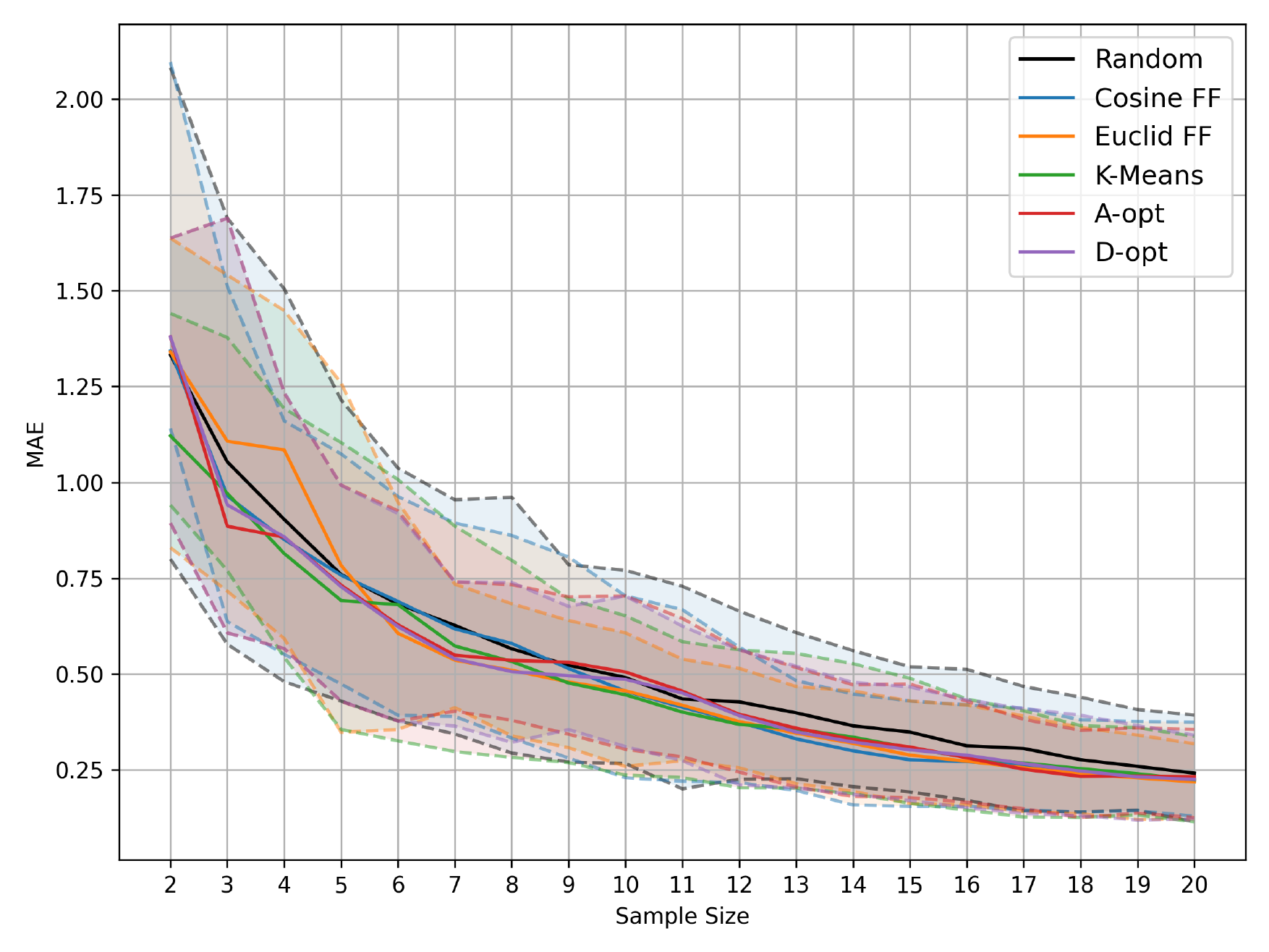}
        \caption*{RecSys: MAE $\downarrow$}
    \end{minipage}\hfill
    \begin{minipage}[t]{0.48\linewidth}
        \centering
        \includegraphics[width=\linewidth]{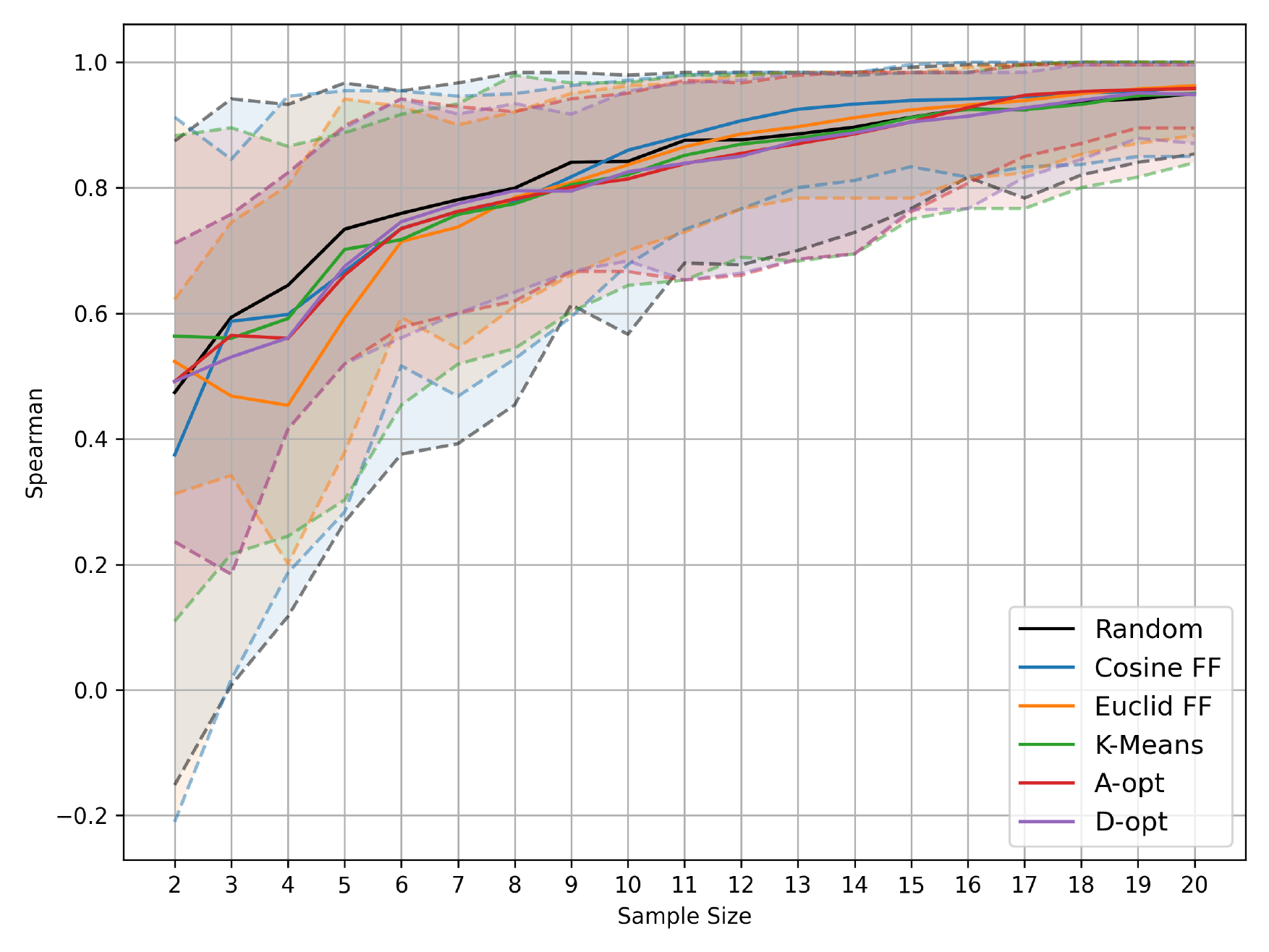}
        \caption*{RecSys: Spearman $\uparrow$}
    \end{minipage}

    \vspace{0.35em}

    \begin{minipage}[t]{0.48\linewidth}
        \centering
        \includegraphics[width=\linewidth]{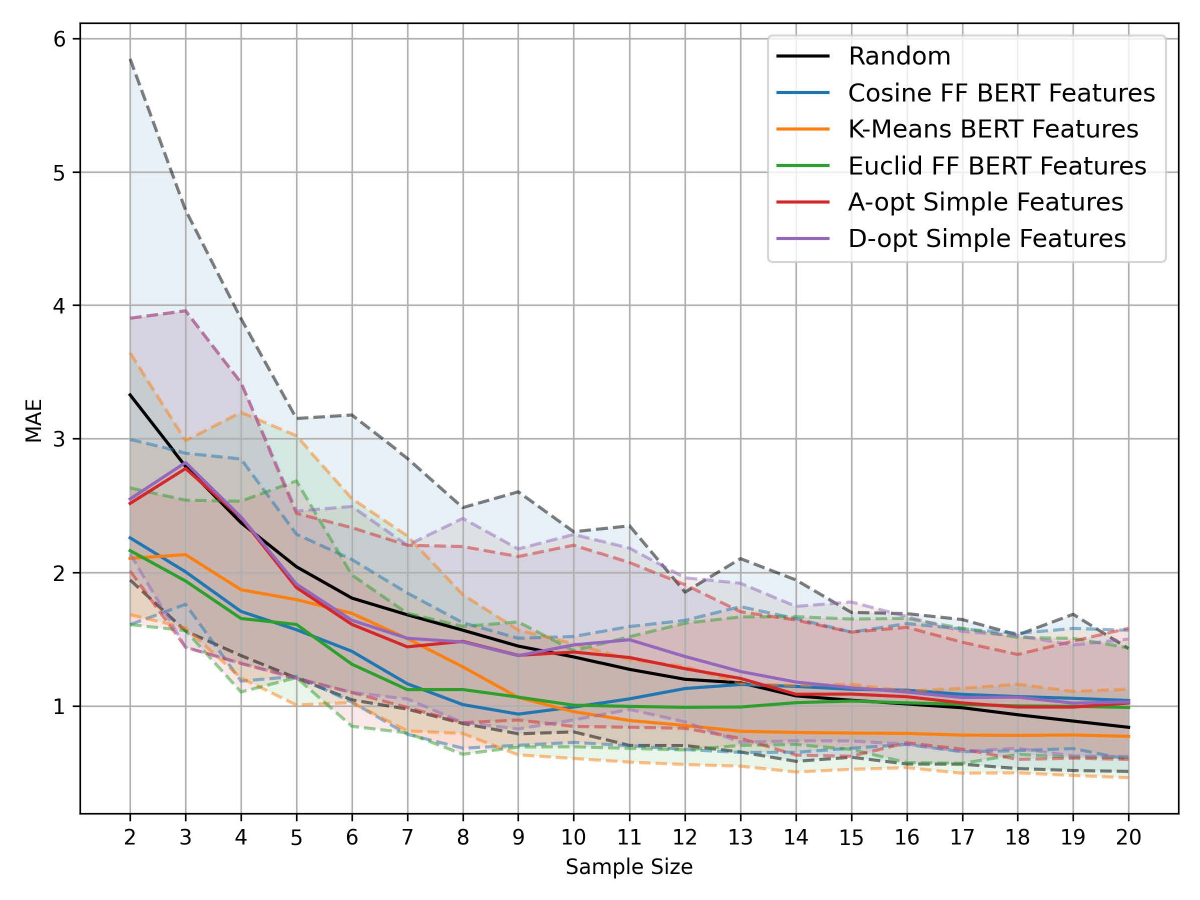}
        \caption*{NLP: MAE $\downarrow$}
    \end{minipage}\hfill   
    \begin{minipage}[t]{0.48\linewidth}
        \centering
        \includegraphics[width=\linewidth]{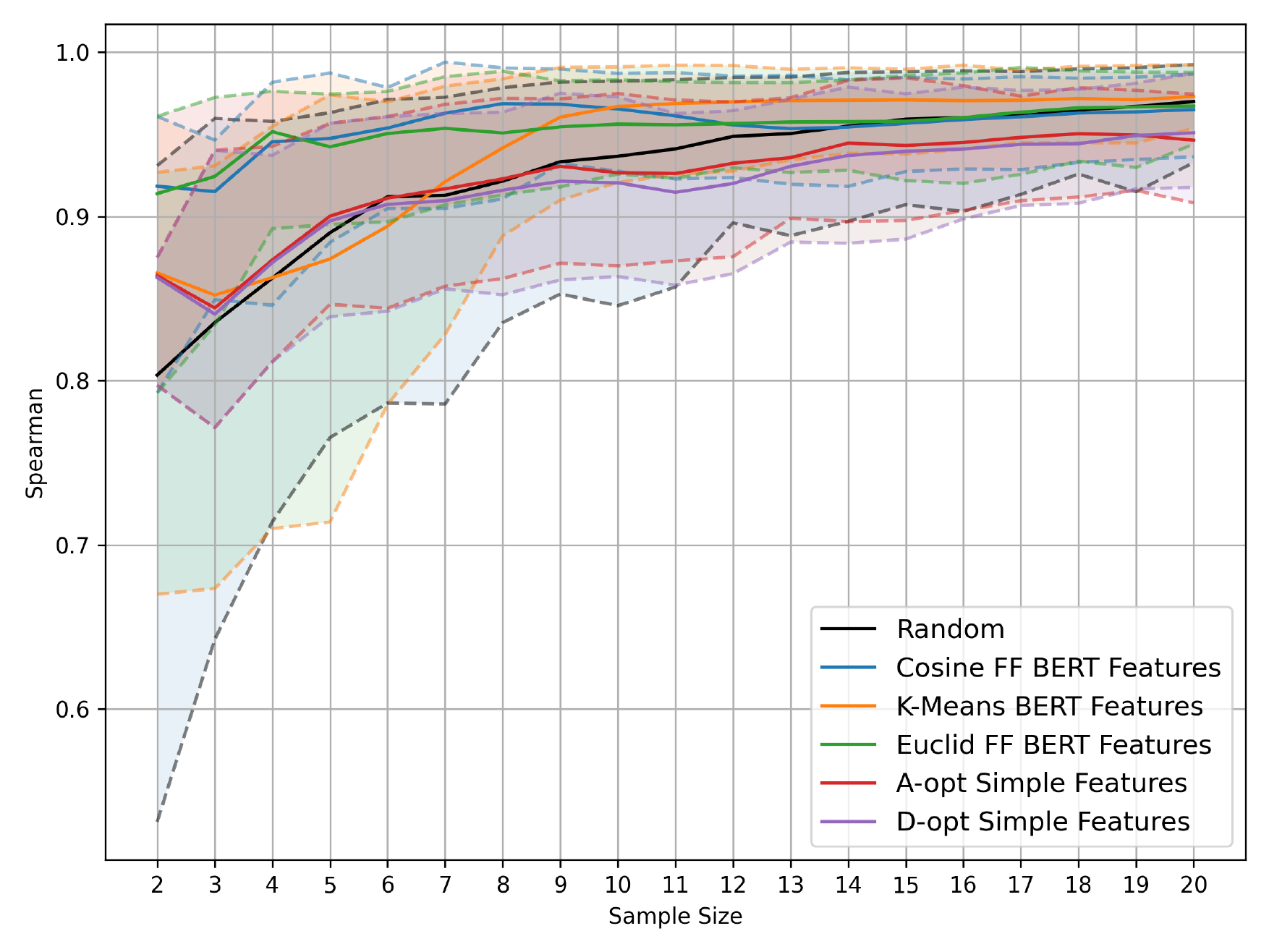}
        \caption*{NLP: Spearman $\uparrow$}
    \end{minipage}

    \caption{
    Rank preservation under dataset subset selection in TSC, RecSys, and NLP.
    We compare selection strategies as a function of subset size $k$ using
    rank MAE ($\downarrow$) and Spearman correlation ($\uparrow$) with respect
    to the full-benchmark ranking. Curves show means and 95\% confidence
    intervals over $T=200$ bootstrap trials with sampling fraction $\alpha=0.8$.
    % Better to view in zoom.
    }
    \label{fig:best_pairs_all_domains}
\end{figure}

\section{Results}
\label{sec:main_results}

\paragraph{Domains} 
We evaluate datasets subset selection strategies in two domains:
Time Series Classification (TSC) and Recommender Systems (RecSys),
and additionally include a supplementary NLP benchmark derived from
MTEB~\cite{muennighoff2023mteb}. 
For experiments, we use a complete score matrix obtained by evaluating each model on every dataset under a fixed, consistent evaluation protocol.

For TSC, we rely on publicly available benchmark results collected from the community-maintained repository~\footnote{timeseriesclassification.com}.
It aggregates evaluation outcomes from multiple large-scale experimental studies. 
In particular, we use results from recent bake-off benchmarks,
covering 112 datasets and 35 classification methods,
evaluated under standardized cross-validation protocols.

For the RecSys domain, we use the benchmark introduced by
Shevchenko et al.~\cite{Shevchenko2024RecSysBenchmarking},
which evaluates collaborative filtering models that utilize past user interactions for next item recommendation across 30 public
datasets under a unified offline evaluation protocol.
We consider 9 representative models and construct a complete
$30 \times 9$ score matrix using the reported results.
Following the original benchmark, \textit{NDCG@10} is used as the
primary evaluation and comparison metric.

For NLP, we use the public MTEB benchmark~\cite{muennighoff2023mteb}. The model--dataset performance
matrix is from Table~11 of~\cite{muennighoff2023mteb}.
After aligning it with available dataset-level representations, we obtain 57 NLP datasets/tasks and 31 text embedding models. 
As in the other domains,
raw benchmark scores are used only to induce model rankings: for each
dataset, models are ranked by their reported score, and subset quality
is measured by how well the selected datasets preserve the full
MTEB-derived model ranking.

\paragraph{Evaluation protocol}
We consider the stability of rankings under dataset and method subsampling.
For dataset subsampling, for each subset size $k \in {2, \dots, 20}$, we sample $T = 200$ training splits of datasets, each split containing a fraction $\alpha$ of all datasets, with the default $\alpha = 0.8$.
Given a training split, each selection method chooses a subset $\mathcal{S}_t$ of size $k$ from the training pool; we then measure how well the model rankings induced by $\mathcal{S}_t$ match the global model ranking computed on the full dataset pool (cf. Section~\ref{sec:problem_definition}).
We report mean curves and $95\%$ confidence intervals across splits for MAE and correlation for the rankings described below in detail.
For model-pool variation, the average performance is close to the result of a single run of the algorithm on the full dataset, and the main effect is reflected in the uncertainty bands. Therefore, we focus the reported figures and AUC tables on dataset-pool variation.

\paragraph{Aggregated metrics}
\label{sec:auc_metrics}
To summarize performance over the entire range of subset sizes, we compute the area under the curve (AUC) with respect to $k$ using the trapezoidal rule.
For a curve $y(k)$ evaluated at $k = 2,\dots,k_{\max}$ we define:
\begin{equation}
\mathrm{AUC}(y) = \sum_{k = 2}^{k_{\max} - 1} \frac{y(k) + y(k + 1)}{2} \cdot 1.
\label{eq:auc_trap}
\end{equation}
We constrain $k_{\max}$ to $20$ for the benchmarking to remain efficient, so the measured metrics make sense for the considered problem statement.
This aggregation over subset sizes resembles AUC-style evaluation and corresponds, up to a constant scaling and boundary corrections, to a mean regret.
Five complementary metrics from Section~\ref{sec:bench_sett} are considered: $\mathrm{MAE}(k)$, Spearman correlation $\rho(k)$, Kendall’s rank correlation $\tau(k)$, $\mathrm{NDCG@5}(k)$, and $\mathrm{MRR}(k)$.

\subsection{Main results}

Table~\ref{tab:auc_combined} reports AUC aggregates over $k = 2, \dots, 20$.  The presented aggregates over different metrics dynamics during the addition of datasets to a benchmark provide a broad picture of the quality of the used dynamic subset selection approaches.
Figure~\ref{fig:best_pairs_all_domains} shows how ranking preservation improves as subset size increases.

\paragraph{Time Series Classification}
\label{sec:result_tsc}
Across methods, we observe the largest gains at small $k$ (typically $k \le 10$), followed by diminishing returns as $k$ approaches $20$.

We compare diversity-based and clustering-based selection with Random sampling.
Diversity-oriented methods (Farthest-First with cosine or Euclidean distance) and K-Means generally outperform Random when the dataset representation captures ranking-relevant structure.
Oracle-style representations derived from a posteriori information (e.g., rank features) provide an upper bound on achievable performance for geometry-based selection.

\paragraph{Recommender Systems}
\label{sec:results_recsys}
Overall, the improvements over Random are weak and inconsistent. Although the average MAE decreases with increasing subset size~$k$, the differences between the selection strategies are small and fall within uncertainty bands. The k-means and diversity-based farthest-first methods achieve slightly lower average MAEs than Random, but the gains remain modest. 
Regarding rank order preservation, as measured by the remaining metrics, no method consistently outperforms Random, with in some cases Random demonstrating the best overall results. 
So, in RecSys, the available dataset priors provide limited information about the models' ranking structure, making geometry-based subset selection significantly less robust.

\paragraph{Natural Language Processing}
\label{sec:nlp_recsys}
The results are closer to TSC than to RecSys: when description-based
BERT representations are used, the geometry-based selection methods preserve the full-benchmark ranking substantially better than random sampling, especially for small subset sizes. 
Structured metadata features provide a weaker but still informative signal. 
This confirms that the protocol itself is domain-agnostic, while the practical gains depend on whether the available dataset representation captures ranking-relevant task differences.

\subsection{Practical subset-size guidance}

Beyond comparing AUC aggregates, we summarize how many datasets are needed to reach useful levels of rank preservation. 
For each domain and metric, we record the smallest subset size $k^\star$ at which a method--representation pair reaches a target threshold. 
We report best/median/worst $k^\star$ across successful configurations using both the mean metric curve and a conservative criterion based on the worst side of the confidence interval. 
Table~\ref{tab:practical_guidance} uses reasonable MAE $\leq 1.5$ and Spearman correlation $\rho \geq 0.90$ as targets.
For MAE $\leq 1.5$, the required subset sizes are small in RecSys, with conservative $k^\star$ being $5$, 
while TSC and NLP require larger subsets, with conservative median values of $18$ and $17$, respectively. 
For Spearman $\rho \geq 0.90$, the mean curves reach the target with median $k^\star = 5$ for TSC, $15$ for RecSys, and $5.5$ for NLP, but the conservative criterion is substantially stricter: the corresponding medians increase to $11$ and $12$ for TSC and NLP, and no RecSys configuration reaches the target for $k \leq 20$.

\begin{table}[t]
\centering
\footnotesize
\setlength{\tabcolsep}{3.5pt}
\renewcommand{\arraystretch}{1.08}
\caption{Practical subset-size guidance. For each domain and metric, we
report best/median/worst $k^\star$ across successful
method--representation pairs. The mean column uses the average metric
curve; the conservative column uses the worst side of the confidence
interval. A dash means that no configuration reaches the target for
$k \leq 20$.}
\label{tab:practical_guidance}
\begin{tabular}{llll}
\toprule
Domain & Target & Mean $k^\star$ & Cons. $k^\star$ \\
\midrule
\multirow{2}{*}{TSC} 
& MAE $\leq 1.5$ & 7 / 11 / 18 & 14 / 18 / 20 \\ 
& $\rho \geq 0.90$ & 3 / 5 / 9 & 7 / 11 / 18 \\ 
\multirow{2}{*}{RecSys} 
& MAE $\leq 1.5$ & 2 / 2 / 2 & 2 / 4 / 5 \\ 
& $\rho \geq 0.90$ & 12 / 15 / 15 & -- \\ 
\multirow{2}{*}{NLP} 
& MAE $\leq 1.5$ & 6 / 7.5 / 9 & 10 / 17 / 20 \\ 
& $\rho \geq 0.90$ & 2 / 5.5 / 7 & 6 / 12 / 17 \\
\bottomrule
\end{tabular}
\end{table}

\subsection{Statistical significance of selection strategies}

To complement aggregate AUC comparisons, we test whether the observed differences between selection strategies are statistically reliable.
For each domain and metric, we compute trial-wise AUC values over subset sizes $k = 2,\ldots,20$, select the best-performing method--representation pair, and compare it against the remaining methods using a one-sided paired Wilcoxon signed-rank test. 
We correct $p$-values within each domain--metric block using the Holm procedure.
Thus, our full procedure for testing the statistical significance of gains follows that of~\cite{Ismail_Fawaz_2019} with detailed discussion available in~\cite{demsar2006statistical}.
Table~\ref{tab:significance_summary} summarizes the results for the two main metrics used in the practical guidance analysis.

\begin{table}[t]
\centering
\footnotesize
\setlength{\tabcolsep}{3.0pt}
\renewcommand{\arraystretch}{1.08}
\caption{Compact statistical comparison of selection strategies. 
For each domain and metric, we report the best method, its AUC gain over Random, the Holm-corrected $p$-value for the comparison with Random,
and the closest statistically indistinguishable competitor, if any.
For MAE, gain is computed as Random AUC minus best AUC; for Spearman,
as best AUC minus Random AUC.}
\label{tab:significance_summary}
\begin{tabular}{@{}llcccc@{}}
\toprule
Domain & Metric & Best & Gain & $p_{\mathrm{Holm}}$ & Near-top n.s. \\
\midrule
\multirow{2}{*}{TSC} & MAE        & E-FAFI        & 5.38 & $<10^{-26}$ & -- \\
 & $\rho$     & C-FAFI        & 0.57 & $<10^{-29}$ & -- \\
\multirow{2}{*}{RecSys}  & MAE        & K-Means     & 0.85 & $<10^{-19}$ & D-opt \\
& $\rho$     & Random      & --   & --          & C-FAFI \\
\multirow{2}{*}{NLP}    & MAE        & K-Means & 5.71 & $<10^{-32}$ & E-FAFI \\
   & $\rho$     & C-FAFI & 0.50 & $<10^{-29}$ & E-FAFI \\
\bottomrule
\end{tabular}
\end{table}

The significance results support the main conclusion that representation quality determines the value of non-random subset selection, with most gains statistically significant ($p$-values $< 0.01$). 
In TSC, representation-based selectors strongly outperform Random for both MAE and Spearman, with gains of $5.38$ and $0.57$ and highly significant Holm-corrected tests. 
NLP shows the same pattern, although the best methods are less sharply separated: K-Means is best for MAE ($5.71$), C-FAFI is best for Spearman ($0.50$), and E-FAFI is statistically indistinguishable from the top method in both metrics. 
RecSys is more metric-dependent: K-Means significantly improves MAE by $0.85$, but Random remains best for Spearman, with C-FAFI forming the closest non-significantly different competitor. 
Thus, non-random selection is reliable with high-fidelity rank occurring when the representation is sufficiently informative.

\subsection{Synthetic Oracle-vs-Broken Representation}

To isolate the role of representation quality, we construct a controlled
synthetic benchmark at the TSC scale. 
We used $N=112$ datasets, $M=35$ models, and $F=30$ folds. Each dataset $i$ and model $m$ is assigned a latent vector $z_i,u_m \in \mathbb{R}^{d_{\rm lat}}$, with $d_{\rm lat}=5$ and $\textbf{z}_i,\textbf{u}_m \sim \mathcal{N}(0,I)$. 
The latent error of model $m$ on dataset $i$ is defined by the squared distance $e_{i,m}=\|z_i-u_m\|_2^2$.
Fold-level scores are then generated as $s^{(f)}_{i,m}=e_{i,m}+\epsilon_i+\epsilon_{i,f}$, where
$\epsilon_i\sim\mathcal{N}(0,\sigma_{\rm ds}^2)$,
$\epsilon_{i,f}\sim\mathcal{N}(0,\sigma_{\rm fold}^2)$,
$\sigma_{\rm ds}=0.02$, and $\sigma_{\rm fold}=0.03$. 
We compute fold-wise ranks and mean model ranks using the same pipeline as in the main experiments.

We then compare two $d_{\rm rep}=64$-dimensional dataset
representations. 
Both are generated from the same latent dataset
geometry, but differ in how much of that geometry is visible to the
selection method. 
Let $A\in\mathbb{R}^{d_{\rm lat}\times d_{\rm rep}}$ be a random projection
with i.i.d. Gaussian entries scaled by $1/\sqrt{d_{\rm lat}}$. The
Oracle representation is $\textbf{x}^{\rm oracle}_i=\textbf{z}_iA+\boldsymbol{\eta}_i$, where
$\boldsymbol{\eta}_i\sim\mathcal{N}(0, \sigma^2 I)$, so distances in representation
space closely reflect the rank-generating geometry. 
In experiments $\sigma^2 = 0.01^2$.
The Broken representation is $\textbf{x}^{\rm broken}_i=\lambda \textbf{z}_iA+\boldsymbol{\xi}_i$, where $\lambda=0.15$ and $\boldsymbol{\xi}_i\sim\mathcal{N}(0,I)$; here the ranking-relevant signal is dominated by unrelated noise.

\begin{table}[t]
\centering
\footnotesize
\setlength{\tabcolsep}{3.5pt}
\renewcommand{\arraystretch}{1.08}
\caption{Synthetic representation-quality ablation. Values are AUC
aggregates over $k=2,\ldots,20$ reported as mean $\pm$ std over 100 seeds.}
\label{tab:synthetic_oracle_broken}
\begin{tabular}{llcc}
\toprule
Regime & Method & AUC(MAE)$\downarrow$ & AUC($\rho$)$\uparrow$ \\
\midrule
\multirow{2}{*}{Oracle}  & Random    & $37.95 \pm 2.79$ & $16.25 \pm 0.26$ \\
& Cosine FAFI & $20.23 \pm 2.39$ & $17.38 \pm 0.16$ \\

\multirow{2}{*}{Broken} & Random    & $39.40 \pm 3.18$ & $15.83 \pm 0.51$ \\
 & Cosine FAFI & $36.94 \pm 7.32$ & $16.04 \pm 0.80$ \\
\bottomrule
\end{tabular}
\end{table}

For each of 100 random seeds, we regenerate the full synthetic
benchmark and apply the same subset-selection protocol as in
Section~\ref{sec:main_results}, aggregating AUC values over $k=2,\ldots,20$. 
The results are reported in Table~\ref{tab:synthetic_oracle_broken}. 
In the Oracle regime, Cosine Farthest-First substantially improves over Random, reducing AUC(MAE) and increasing AUC($\rho$). 
In the Broken regime, the gap nearly disappears: Cosine Farthest-First reaches $36.94$ AUC(MAE) compared with $39.40$ for Random, and only marginally improves AUC($\rho$).
Thus, geometric selection helps when representation distances align
with model-ranking differences, but provides little advantage when
the representation is weakly aligned with the rank-generating structure.

\section{Conclusions and discussion}

Benchmarks across multiple domains span tens to hundreds of datasets, yet selecting a subset of tasks for faster evaluation remains largely heuristic and rarely justified by how well it preserves global model rankings. 
We frame dataset subset selection as a rank-preservation problem and introduce a unified protocol for systematically comparing selection strategies. 
The protocol includes bootstrap aggregation, which yields confidence intervals and enables principled statistical comparison of subset selection methods across subset sizes.

In time-series classification, when the dataset descriptors induce a meaningful geometry, greedy farthest-first selection consistently preserves rankings
better than random subsets. 
The supplementary NLP experiment shows a similar pattern when structured task descriptions are embedded with BERT,
suggesting that inexpensive semantic descriptions can already provide a useful geometry for benchmark construction. 
For recommender systems, in contrast, the advantage of sophisticated selection procedures is small, indicating that commonly available dataset features fail to capture the factors that drive model differentiation.
We also provide theoretical bounds for farthest-first selection, showing that improved geometric coverage is accompanied by lower approximation and pairwise ranking errors as the subset grows. 
To our knowledge, there has been little theoretical analysis of rank-preservation quality in adaptive design-of-experiments settings, and we derive bounds that explicitly link geometric coverage to ranking errors. 

The subset-size and significance analyses turn these observations into practical guidance: small subsets can often preserve correlation-based rankings in TSC and NLP, whereas stricter MAE targets or conservative confidence criteria require larger subsets.

Overall, this work provides a practical framework for rank-preserving benchmark reduction. 
Instead of treating benchmark subset choice as an informal or convenience-driven decision, we pose it as a measurable dataset selection problem with explicit rank-preservation metrics and statistical significance comparisons. 
The resulting protocol can be reused in a multi-dataset benchmark with more than $50$ tasks where evaluation costs are substantial and meaningful dataset-level representations can be constructed, with our experiments confirming the evidence from time series classification and NLP domains. 

\section{Acknowledgments}

The work was supported by the grant for research centers in the field of AI provided by the Ministry of Economic Development of the Russian Federation in accordance with the agreement \newline 000000C313925P4F0002 and the agreement №139-10-2025-033.

%%
%% The next two lines define the bibliography style to be used, and
%% the bibliography file.
% \FloatBarrier
\bibliographystyle{ACM-Reference-Format}
\bibliography{sample-base}

%%
%% If your work has an appendix, this is the place to put it.
\appendix

\section{Proofs of the statements in Section \ref{sec:theory}}

\subsection{Notation and preliminary remarks}

This appendix provides the auxiliary statements and proofs supporting
Section~\ref{sec:theory}, in particular Theorems~\ref{th:imse} and~\ref{th:rank_error}. All objects defined in Section~\ref{sec:theory}---the dataset pool $\mathcal{D}$, the kernel $k$, the dissimilarity $d_k$, the farthest-first sequence $(x_t)_{t\ge 2}$ with selected sets $\mathcal{S}_t$, the covering radius $r_t$, the GP model for $f$ with noise variance $\sigma^2$, and the discrete IMSE $E_t$---are used here without redefinition.
We also define the \emph{covering radius} $r_t$:
\[
r_t := \max_{\vecX \in \mathcal{D} \setminus \mathcal{S}_t} \min_{s \in \mathcal{S}_t} d_k(\vecX, \vecX'), 
\]
with 
$r_0 = \max_{\vecX, \vecX' \in \mathcal{D}} d_k(\vecX, \textbf{s})$.

For compactness in the proofs, we introduce the standard matrix notation for the GP posterior after $t$ observations at the selected points $S_{t} = \{\vecX_1, \dots, \vecX_{t}\}$:
\begin{gather}
K_{t} := \bigl[k(\vecX_i, \vecX_j)\bigr]_{i, j = 1}^{t}, \quad
\vecK_{t}(\vecX) := \bigl(k(\vecX_1, \vecX), \dots, k(\vecX_{t}, \vecX)\bigr)^\top, \\
\quad
\vecY_{t} := (y_1, \dots, y_{t})^\top, \text{where } y_i = f(\vecX_i).
\end{gather}
Whenever needed, we will use the GP posterior identities under $k(\vecX, \vecX) = 1$:
\begin{gather}
\label{eq:gp_posterior_app}
\mu_{t}(\vecX)
= \vecK_{t}(\vecX)^\top \bigl(K_{t} + \sigma^2 I\bigr)^{-1} \vecY_{t}, \\
\quad
\sigma^2_{t}(\vecX)
= 1 - \vecK_{t}(\vecX)^\top \bigl(K_{t} + \sigma^2 I\bigr)^{-1} \vecK_{t}(\vecX). 
\end{gather}

To relate posterior uncertainty to geometric coverage, we will repeatedly use the maximum kernel similarity between $\vecX$ and the selected set~$\mathcal{S}_{t}$:
\begin{equation}
\label{eq:m_t_def_app}
m_{t}(\vecX) := \max_{j \le t} k(\vecX, \vecX_j), \quad x\in\mathcal{D},
\end{equation}

\subsection{Preliminary Lemmas.}
The proof of Theorem~\ref{th:imse} combines: (i) an upper bound on $\sigma^2_{t-1}(\vecX)$ in terms of $m_{t-1}(\vecX)$ and $\sigma^2$, (ii) the uniform GP concentration inequality with parameter $\beta_t$ from Section~\ref{sec:theory}, and (iii) a covering argument for the farthest-first traversal. 
Theorem~\ref{th:rank_error} then follows by applying a Gaussian tail bound to the posterior error in the estimated pairwise gaps.

\begin{lemma}[Upper bound on the GP posterior variance]\label{lem:upper-variance}
Assume a zero-mean Gaussian process prior with covariance function $k(\cdot,\cdot)$ satisfying $k(\vecX, \vecX) = 1$, with additional observation noise variance $\sigma^2 > 0$. 
Then the posterior variance at any point $\vecX \in \mathcal{D}$ satisfies
\begin{equation}
\sigma_{t-1}^2(\vecX) \le 2\bigl(1 - m_{t-1}(\vecX)\bigr) + \sigma^2.
\end{equation}
\end{lemma}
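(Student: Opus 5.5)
The plan is to use the variational characterization of the Gaussian posterior variance: it is the smallest mean squared error attainable by any linear predictor built from the observations. Once that is in place, it suffices to exhibit one crude linear predictor whose error is exactly $2(1-m_{t-1}(\vecX))+\sigma^2$. The natural candidate simply copies the noisy observation at the selected point most similar to $\vecX$.

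\emph{Step 1 (variational form).} We work with $t\ge 2$, so that $\mathcal{S}_{t-1}$ is nonempty and $m_{t-1}(\vecX)$ is defined. Let $\vecY_{t-1}=\mathbf{f}_{t-1}+\boldsymbol{\varepsilon}$, where $\boldsymbol{\varepsilon}\sim\mathcal{N}(0,\sigma^2 I)$ is independent of $f$. Since $(f(\vecX),\vecY_{t-1})$ is jointly Gaussian with zero mean, we have
\[
\sigma^2_{t-1}(\vecX)=\min_{\mathbf{a}\in\mathbb{R}^{t-1}}\mathbb{E}\bigl(f(\vecX)-\mathbf{a}^\top\vecY_{t-1}\bigr)^2 .
\]
To check this, expand the quadratic $1-2\mathbf{a}^\top\vecK_{t-1}(\vecX)+\mathbf{a}^\top(K_{t-1}+\sigma^2 I)\mathbf{a}$. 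It is minimized at $\mathbf{a}^\ast=(K_{t-1}+\sigma^2 I)^{-1}\vecK_{t-1}(\vecX)$, and the minimum value is exactly the expression in~\eqref{eq:gp_posterior_app}. Hence any particular choice of $\mathbf{a}$ gives an upper bound on $\sigma^2_{t-1}(\vecX)$.

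\emph{Step 2 (nearest-point predictor).} Let $j^\ast\in\arg\max_{j\le t-1}k(\vecX,\vecX_j)$, so that $k(\vecX,\vecX_{j^\ast})=m_{t-1}(\vecX)$, and take $\mathbf{a}=\mathbf{e}_{j^\ast}$. Using $k(\vecX,\vecX)=k(\vecX_{j^\ast},\vecX_{j^\ast})=1$ and the independence of the noise,
\[
\mathbb{E}\bigl(f(\vecX)-f(\vecX_{j^\ast})-\varepsilon_{j^\ast}\bigr)^2
=1-2m_{t-1}(\vecX)+1+\sigma^2
=2\bigl(1-m_{t-1}(\vecX)\bigr)+\sigma^2 .
\]
Combining this with Step 1 gives the claim.

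As a cross-check, one can instead choose the optimal single-point weight $a=m/(1+\sigma^2)$, with $m=m_{t-1}(\vecX)$. This gives $1-m^2/(1+\sigma^2)\le 1-m^2+\sigma^2$, and since $|m|\le 1$ we have $1-m^2=(1-m)(1+m)\le 2(1-m)$, which yields the same bound.

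\emph{Main obstacle.} There is no real analytic obstacle; the only care needed is in Step 1. One must make sure that the posterior variance in~\eqref{eq:gp_posterior_app} is the variance of the latent value $f(\vecX)$ (no noise added at the prediction point), and that the matrix $(K_{t-1}+\sigma^2 I)$ is the covariance of the noisy observations, so that it is invertible and the quadratic-minimization identity holds. This requires $\sigma^2>0$ or $K_{t-1}$ nonsingular, which the hypothesis $\sigma^2>0$ guarantees. The $\min$ over $\mathbf{a}$ then encodes the monotonicity fact that conditioning on more points never increases variance, so no separate argument is needed.
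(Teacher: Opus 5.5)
Your proof is correct and matches the paper's. The paper minimizes the same quadratic $F(\omega)=1-2\omega^\top\vecK_{t-1}(\vecX)+\omega^\top(K_{t-1}+\sigma^2 I)\omega$, whose minimum is $\sigma^2_{t-1}(\vecX)$, and evaluates it at a standard basis vector $e_j$ to get $2(1-k(\vecX,\vecX_j))+\sigma^2$ before minimizing over $j$; reading $F$ as the mean squared error of a linear predictor from the noisy observations is just a probabilistic interpretation of that same algebra.
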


\begin{proof}
Define $K := K_{t-1} + \sigma^2 I$, $\vecK = \vecK_{t-1} (\vecX)$.
The posterior variance of the Gaussian process at $\vecX$ is given by $\sigma_{t-1}^2(\vecX) = k(\vecX, \vecX) - \vecK^\top K^{-1} \vecK.$
Consider the quadratic function
\begin{equation}
F(\omega) := k(\vecX, \vecX) - 2 \omega^\top \vecK + \omega^\top K \omega,
\quad \omega \in \mathbb{R}^{t-1}.
\end{equation}
Its gradient is $\nabla F(\omega) = 2K\omega - 2\vecK$, which vanishes at $\omega^\star = K^{-1} \vecK$. 
Since the Hessian $\nabla^2 F(\omega) = 2K$ is positive semidefinite, $\omega^\star$ is a global minimizer, and
\begin{equation}
\min_{\omega} F(\omega) = F(\omega^\star) = \sigma_{t-1}^2(\vecX).
\end{equation}
Evaluating $F(\omega)$ at the standard basis vector $e_j$ yields $F(e_j) = k(\vecX, \vecX) + K_{jj} - 2 k(\vecX, \vecX_j).$
Using $k(\vecX, \vecX) = 1$ and $K_{jj} = k(\vecX_j, \vecX_j) + \sigma^2 = 1 + \sigma^2$,
we obtain $F(e_j) = 2\bigl(1 - k(\vecX, \vecX_j)\bigr) + \sigma^2.$

Since $\sigma_{t-1}^2(\vecX)$ is the minimum of $F(\omega)$, it follows that for any 
$j \le t - 1$:
\begin{equation}
\sigma_{t-1}^2(\vecX)
\le
2\bigl(1 - k(\vecX, \vecX_j)\bigr) + \sigma^2.
\end{equation}
Taking the minimum over $j$ yields the claimed bound.
\end{proof}

\begin{lemma}[Concentration of a Gaussian Process on a Finite Set, 
Lemma~5.1 in Appendix~I of \cite{Srinivas2010Gaussian}]
\label{lem:concentration}
Let $\delta \in (0, 1)$ and define $\beta_t = 2 \ln \frac{|\mathcal{D}|\,\pi_t}{\delta}$, where $\sum_{t=1}^\infty \pi_t^{-1} = 1,\, \pi_t > 0.$
Then, with probability at least $1 - \delta$, for all $t \ge 1$ and all $\vecX \in \mathcal{D}$, the following holds:
\begin{equation}
\bigl|f(\vecX) - \mu_{t-1}(\vecX) \bigr|
\le
\sqrt{\beta_t} \sigma_{t-1}(\vecX).
\end{equation}
\end{lemma}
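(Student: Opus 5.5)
The plan is the standard Gaussian-tail-plus-union-bound argument of \cite{Srinivas2010Gaussian}, specialized to the finite pool $\mathcal{D}$. First I will fix $t \ge 1$ and $\vecX \in \mathcal{D}$ and condition on the observations $\vecY_{t-1}$ collected at $\mathcal{S}_{t-1}$. Under the GP prior with Gaussian observation noise, the conditional law of $f(\vecX)$ given $\vecY_{t-1}$ is $\mathcal{N}(\mu_{t-1}(\vecX), \sigma^2_{t-1}(\vecX))$, with mean and variance given by \eqref{eq:gp_posterior_app}. An important feature of Algorithm~\ref{alg:farthest_first} is that the design points $\vecX_1, \dots, \vecX_{t-1}$ depend only on the representations and the kernel, not on $f$. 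The design is therefore deterministic, and the conditional Gaussianity holds without any measurability subtleties. If one wanted the lemma for adaptive designs, the same step would go through because $\vecX_t$ is $\vecY_{t-1}$-measurable.

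Next I will use the elementary Gaussian tail bound: for $Z \sim \mathcal{N}(0,1)$ and $c \ge 0$,
\[
\mathbb{P}(|Z| > c) \le e^{-c^2/2}.
\]
It follows from $\mathbb{P}(Z > c) = e^{-c^2/2}(2\pi)^{-1/2}\int_0^\infty e^{-(r^2/2) - rc}\,dr \le \tfrac12 e^{-c^2/2}$. If $\sigma_{t-1}(\vecX) > 0$, I apply this to $Z = (f(\vecX) - \mu_{t-1}(\vecX))/\sigma_{t-1}(\vecX)$ with $c = \sqrt{\beta_t}$. This gives
\[
\mathbb{P}\bigl(|f(\vecX) - \mu_{t-1}(\vecX)| > \sqrt{\beta_t}\,\sigma_{t-1}(\vecX) \,\big|\, \vecY_{t-1}\bigr) \le e^{-\beta_t/2} = \frac{\delta}{|\mathcal{D}|\,\pi_t}.
\]
If $\sigma_{t-1}(\vecX) = 0$, the event has conditional probability zero. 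Taking expectations via the tower property, the same bound holds unconditionally.

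Finally, a union bound over all $\vecX \in \mathcal{D}$ (there are $|\mathcal{D}|$ of them) and all $t \ge 1$ bounds the failure probability by
\[
\sum_{t \ge 1} \sum_{\vecX \in \mathcal{D}} \frac{\delta}{|\mathcal{D}|\pi_t} = \delta \sum_{t\ge1} \pi_t^{-1} = \delta.
\]
This yields the claim with probability at least $1 - \delta$. The choice $\pi_t = \pi^2 t^2/6$ satisfies $\sum_t \pi_t^{-1} = 1$ and recovers exactly the $\beta_t$ of Theorem~\ref{th:imse}.

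The only step needing care is the first. One must be precise about whether $\mu_{t-1}, \sigma_{t-1}$ describe the posterior of the latent $f$ or of the noisy observation at $\vecX$. The tail argument works for either, provided the mean and variance used are those of the same Gaussian whose deviation is being bounded. I will state the result for the latent $f(\vecX)$, using the posterior formulas \eqref{eq:gp_posterior_app} with noisy training observations.
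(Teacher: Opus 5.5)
Your proposal is correct and is the argument of Lemma~5.1 in \cite{Srinivas2010Gaussian}, which the paper cites rather than reproves. You use conditional Gaussianity of $f(\vecX)$ given $\vecY_{t-1}$ (simpler here, since the farthest-first design is deterministic), the tail bound $\mathbb{P}(|Z|>c)\le e^{-c^2/2}$, and a union bound over $\mathcal{D}$ and $t$ via $e^{-\beta_t/2}=\delta/(|\mathcal{D}|\pi_t)$ and $\sum_t\pi_t^{-1}=1$. Your reading of $f$ as the latent function observed with noise is also the right one: it is the reading under which the variance in \eqref{eq:gp_posterior_app} is exactly the conditional variance of $f(\vecX)$, matching the cited setting.
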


Combining the result of Lemma~\ref{lem:upper-variance} ($\sigma^2_{t-1}(\vecX) \le 2(1 - m_{t-1}(\vecX)) + \sigma^2$) and the inequality from Lemma~\ref{lem:concentration}, we obtain that, with probability at least $1 - \delta$,
\begin{equation}
(f(\vecX) - \mu_{t-1}(\vecX))^2
\le
2 \beta_t \bigl(1 - m_{t-1}(\vecX)\bigr) + \beta_t \sigma^2.
\label{eq:sq_error_final}
\end{equation}

\begin{lemma}[Dimension-dependent covering bound]
\label{lem:covering-radius-dim}
Let $\mathcal{D} \subset \mathbb{R}^p$ be a finite set, and define its diameter as $\Delta := \mathrm{diam}(\mathcal{D}) = \max_{\vecX, \vecX' \in \mathcal{D}} \|\vecX - \vecX'\|_2 > 0$. Let the points $\vecX_1, \dots, \vecX_t$ ($2 \le t \le |\mathcal{D}|$) be selected according to the greedy
farthest-first rule~\eqref{eq:fufi} with $\mathcal{S}_{t-1} = \{\vecX_1, \dots, \vecX_{t - 1}\}$.
Define the Euclidean covering radius
\begin{equation}
r_t^e =
\max_{\vecX \in \mathcal{D} \setminus \mathcal{S}_t}
\min_{\vecS \in \mathcal{S}_t} \|\vecX - \vecS\|_2.
\end{equation}
Then, for all $2 \le t \le |\mathcal{D}|$, the following bound holds:
\begin{equation}
r_t^e \le \frac{2\,\Delta}{t^{1/p} - 1}.
\end{equation}
\end{lemma}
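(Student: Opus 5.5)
The plan is the classical packing argument behind Gonzalez's $2$-approximation for $k$-center: farthest-first produces a well-separated set, and a volume comparison in $\mathbb{R}^p$ limits how many well-separated points fit inside a set of diameter $\Delta$.

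\emph{Which metric the greedy uses.} The lemma measures coverage in the Euclidean metric, while rule~\eqref{eq:fufi} maximizes $d_k$. I will assume $d_k(\vecX,\vecX')$ is a strictly increasing function of $\|\vecX-\vecX'\|_2$. This holds for stationary radial kernels such as the RBF kernel, and then both rules select the same points. Otherwise I read the lemma as a statement about Euclidean farthest-first, as in Algorithm~\ref{alg:farthest_first}. The choice of the first point plays no role below. If $\mathcal{D}\setminus\mathcal{S}_t=\emptyset$, I set $r_t^e=0$ and there is nothing to prove.

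\emph{Step 1: greedy distances do not increase.} Define $\rho_j:=\min_{i<j}\|\vecX_j-\vecX_i\|_2$ for $j\ge 2$. Because $\mathcal{S}_{j-1}\subset\mathcal{S}_j$, the function $\vecX\mapsto \min_{\vecS\in\mathcal{S}_j}\|\vecX-\vecS\|_2$ is pointwise no larger than the corresponding function for $\mathcal{S}_{j-1}$. Taking maxima over the shrinking candidate set gives $\rho_{j+1}\le\rho_j$.

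Let $\vecX_{t+1}$ be the next greedy point, which is the maximizer defining $r_t^e$. Then $r_t^e=\rho_{t+1}$. For any $i<j\le t+1$ we get $\|\vecX_i-\vecX_j\|_2\ge\rho_j\ge\rho_{t+1}=r_t^e$. So the $t+1$ points $\vecX_1,\dots,\vecX_{t+1}$ are pairwise at distance at least $r:=r_t^e$.

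\emph{Step 2: volume comparison.} The open balls $B(\vecX_j,r/2)$, $j=1,\dots,t+1$, are pairwise disjoint. Every $\vecX_j$ lies within $\Delta$ of $\vecX_1$, so all these balls sit inside $B(\vecX_1,\Delta+r/2)$. Comparing Lebesgue volumes and cancelling the unit-ball constant gives
\[
(t+1)\,(r/2)^p \le (\Delta + r/2)^p .
\]
Taking $p$-th roots yields $(t+1)^{1/p}\,r/2\le \Delta+r/2$, that is,
\[
r\le \frac{2\Delta}{(t+1)^{1/p}-1}\le\frac{2\Delta}{t^{1/p}-1}.
\]
The denominator is positive because $t\ge2$.

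\emph{Main obstacle.} The only delicate point is Step 1, specifically getting the monotonicity of $\rho_j$ and the identification $r_t^e=\rho_{t+1}$ right. This is where the mismatch between the $d_k$-based selection rule and the Euclidean radius must be handled through the monotone-kernel assumption. The volume computation itself is routine.
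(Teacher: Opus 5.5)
Your proof is correct and uses the same packing argument as the paper: the farthest-first points are pairwise at least $r_t^e$ apart, so disjoint balls of radius $r_t^e/2$ fit inside a ball of radius $\Delta + r_t^e/2$, and comparing volumes gives the bound. Your version is more careful in three places: you prove the separation property via monotonicity of the greedy distances, which the paper only asserts; you include the next greedy point to get the sharper $(t+1)^{1/p}$ before relaxing to $t^{1/p}$; and you state explicitly that the $d_k$-greedy must coincide with Euclidean farthest-first, an assumption the paper leaves implicit but which the lemma needs in order to hold.
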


\begin{proof}
We proceed by induction on $t$.
The greedy farthest-first selection rule implies that, for the already selected points,
the following inequality holds: $\min_{\substack{i,j \le t; i\neq j}} \|\vecX_i - \vecX_j\|_2 \ge r_t^e$.
As a consequence, the Euclidean balls $B(\vecX_i, r_t^e/2)$ centered at $\vecX_i$ with radius $r_t^e/2$ do not intersect for $i = 1, \ldots, t$.

Since the entire set $\mathcal{D}$ is contained in a ball of radius $\Delta$
there exists a center $\mathbf{c} \in \mathbb{R}^d$ such that for any $i = 1, \dots, t$:
\begin{equation}
B(\vecX_i, r_t^e/2)
\subseteq
B \left(\mathbf{c}, \Delta + r_t^e/2 \right).
\end{equation}

Let
$V_p(r) := \frac{\pi^{p/2} r^d}{\Gamma\!\left(1+\tfrac{p}{2}\right)}$
denote the volume of a $p$-dimensional Euclidean ball of radius $r$.
By comparing volumes of the disjoint balls and the enclosing ball, we obtain
\begin{equation}
t\,V_p\bigl(r_t^e/2\bigr)
\le
V_p\bigl(\Delta + r_t^e/2\bigr).
\end{equation}

Introducing the normalized variable $y = \frac{r_t^e}{2\Delta} \in (0,1],$
the above inequality can be rewritten as $t\,y^p \le (1+y)^p.$
Taking the $p$-th root of both sides, we obtain $y \le \frac{1}{t^{1/p}-1}.$
Substituting back the definition of $y$ completes the proof.
\end{proof}

In order to apply this result to the kernel-induced dissimilarity used in our setting, we require the following bi-Lipschitz-type condition to hold on the entire set $\mathcal{D}$: there exist constants $C_1, C_2$, such that for any
$\vecX, \vecY \in \mathcal{D}$ it holds that
$C_1 \lVert \vecX - \vecY \rVert_2
\le d_k(\vecX, \vecY) := 2\bigl(1 - k(\vecX, \vecY)\bigr)
\le C_2 \lVert \vecX - \vecY \rVert_2$.

For a finite set $\mathcal{D}$, this condition is satisfied by choosing
\begin{equation}
C_1 =
\min_{\substack{\vecX, \vecY \in \mathcal{D}\\ \vecX \neq \vecY}}
\frac{d_k(x,y)}{\|x - y\|_2},
\quad
C_2 =
\max_{\substack{\vecX, \vecY \in \mathcal{D}\\ \vecX \neq \vecY}}
\frac{d_k(\vecX, \vecY)}{\|\vecX - \vecY\|_2},
\end{equation}
where $C_2 = C < \infty$ since the maximum is taken over a finite set, and
$C_1 > 0$ provided that $\|\vecX - \vecY\|_2 \neq 0$ for all $\vecX \neq \vecY$ (i.e., $\mathcal{D}$ contains no duplicate points).
Thus, given Lemma~\ref{lem:covering-radius-dim}, the kernel-induced covering radius satisfies:
\begin{equation}
r_t \le C\frac{2\,\Delta}{t^{1/p} - 1}.
\end{equation}

\subsection{Proof of Theorem~\ref{th:imse}}

For any $\vecX \in \mathcal{D}$, on the high-probability event defined in
Lemma~\ref{lem:concentration}, the following inequality holds: $\bigl(f(\vecX) - \mu_{t-1}(\vecX)\bigr)^2 \le \beta_t \sigma_{t-1}^2(\vecX)$. 
Summing over all points in $\mathcal{D}$, where $|\mathcal{D}| = n$, we obtain
\begin{equation}
\label{eq:et_bound}
E_t \le \beta_t \frac{1}{n}\sum_{\vecX \in \mathcal{D}} \sigma_{t-1}^2(\vecX).
\end{equation}
Denoting $V_{t-1} := \sum_{x \in \mathcal{D}} \sigma_{t-1}^2(\vecX)$, we obtain
$E_t \le \beta_t V_{t-1}/n$.

Using Lemma~\ref{lem:upper-variance}, we obtain $V_{t-1} \le A_{t-1} + n \sigma^2,$ where $A_{t-1} := \sum_{\vecX \in \mathcal{D}} 2\bigl(1 - \max_{j\le t-1} k(\vecX, \vecX_j) \bigr)$. 
By Lemma~\ref{lem:covering-radius-dim}, the quantity $A_{t-1}$ admits the following upper bound:
\begin{equation}
A_{t-1}
\le
n C\frac{2\,\Delta}{(t-1)^{1/p}-1}.
\end{equation}
\noindent
Substituting this estimate into the bound for $E_t$ in~\eqref{eq:et_bound}, we obtain the theorem statement.

\subsection{Proof of Theorem~\ref{th:rank_error}}\label{app:rank_error}

By construction of the Gaussian process posterior, we have
\begin{equation}
\mathbb{E} \left[(f_i(\vecX) - \hat{f}_{i, t - 1}(\vecX))^2 \mid \mathcal{D}_{t-1}\right]
= \sigma^2_{t-1}(\vecX), 
\quad i = 1, 2.
\end{equation}
Define the approximation errors $\varepsilon_i(\vecX) = f_i(\vecX) - \hat{f}_{i, t - 1}(\vecX), \, i = 1, 2$. Since the two Gaussian process models are independent, the random variables $\varepsilon_1(x)$ and $\varepsilon_2(x)$ are independent, and therefore
\[
e(\vecX)|\mathcal{S}_{t-1} = \hat{g}_{t-1}(x) - g(x) = \varepsilon_2(x) - \varepsilon_1(x)
\sim \mathcal{N} \left(0, 2 \sigma^2_{t-1}(\vecX) \right).
\]

We now reformulate the error event in a more convenient form.
Conditionally on $\mathcal{S}_{t-1}$, the quantity $\hat{g}_{t - 1}(\vecX)$ is fixed and
$g(\vecX) = \hat{g}_{t-1}(\vecX) - e(\vecX)$.
If $\hat{g}_{t - 1}(\vecX) > 0$, a ranking error occurs when $g(\vecX) < 0$, i.e.,
\begin{equation}
\hat{g}_{t-1}(\vecX) - e(\vecX) < 0
\quad\Longleftrightarrow\quad
e(\vecX) > \hat{g}_{t-1}(\vecX).
\end{equation}
An analogous argument holds for $\hat{g}_{t-1}(\vecX) < 0$.
Hence, a ranking error occurs if and only if $|e(x)| > |\hat{g}_{t-1}(\vecX)|.$

Let $F(\cdot)$ denote the cumulative distribution function of the standard normal
distribution. Since $e(\vecX) \sim \mathcal{N}(0, 2 \sigma^2_{t-1}(\vecX))$, the probability of a ranking error at point $\vecX$ can be written as
\begin{equation}
\mathbb{P} \bigl(|e(\vecX)| > |\hat g_{t-1}(\vecX)|\bigr)
= 2F \left(
-\frac{|\hat g_{t-1}(\vecX)|}{\sqrt{2} \sigma_{t-1}(\vecX)}
\right).
\end{equation}
Given the standard Gaussian tail bound $F(-a) \le \tfrac12 \exp(-a^2/2)$,
\begin{equation}
\mathbb{P}\bigl(|e(\vecX)| > |\hat g_{t-1}(\vecX)|\bigr)
\le
\exp\!\left(
-\frac{|\hat g_{t-1}(\vecX)|^2}{4 \sigma^2_{t-1}(\vecX)}
\right).
\end{equation}

Taking expectations over the dataset pool yields
\begin{equation}
\mathbb{E}[\mathrm{Err}_t|\mathcal{S}_{t-1}]
\le
\frac{1}{n}
\sum_{\vecX \in \mathcal{D}}
\exp \left(
-\frac{|\hat g_{t-1}(\vecX)|^2}{4 \sigma^2_{t - 1}(\vecX)}
\right).
\end{equation}

From the uniform upper bound on the posterior variance derived previously, we have
$\sigma^2_{t-1}(\vecX) \le S_t$ for all $\vecX \in \mathcal{D}$, where
\begin{equation}
S_t :=
C \frac{2\,\Delta}{(t - 1)^{1/p}-1}
+ \sigma^2.
\end{equation}
Let $\hat g_{\min}:=\min_{\vecX \in \mathcal{D}}|\hat g_{t-1}(\vecX)|$.
Then
\begin{equation}
\frac{1}{n}
\sum_{\vecX \in \mathcal{D}}
\exp \left( -\frac{|\hat g_{t-1}(\vecX)|^2}{4 \sigma^2_{t-1}(\vecX)} \right)
\le
\exp \left( -\frac{|\hat g_{\min}|^2}{4 S_t} \right).
\end{equation}
Consequently, the expected fraction of ranking errors satisfies
\begin{equation}
\mathbb{E}[\mathrm{Err}_t|\mathcal{S}_{t-1}]
\le
\exp \left( -\frac{|\hat g_{\min}|^2}{4 S_t} \right).
\end{equation}

\subsection{Cosine specialization}
\label{subsec:cosine-geometry}

For cosine similarity, define normalized vectors
$u_x=x/\|x\|_2$. Then
$d_k(x,y)=2(1-\cos(x,y))=\|u_x-u_y\|_2^2$.
Thus farthest-first with $d_k$ is equivalent to Euclidean
farthest-first on the normalized set $\{u_x:x\in D\}$, since
the square function is monotone and therefore does not change
the maximizer in the greedy step. Consequently, the Euclidean
covering bound applies to the normalized points:
if $\Delta_u=\operatorname{diam}(\{u_x:x\in D\})$, then
$r_t^{(e)}\le 2\Delta_u/(t^{1/p}-1)$ and the corresponding
$d_k$-covering radius satisfies
$r_t=(r_t^{(e)})^2\le (2\Delta_u/(t^{1/p}-1))^2$.
Since $\Delta_u\le 2$ on the unit sphere, constants can be
absorbed into the covering constant used in Theorem~\ref{th:imse}.

\section{Benchmark Scale Analysis}

The main TSC--RecSys comparison may confound domain effects with benchmark scale: TSC has 112 datasets and 35 models, whereas RecSys has only 30 datasets and 9 models. To isolate this factor, we construct RecSys-sized TSC mini-benchmarks. For each of $R=100$ repetitions, we sample $|\mathcal D_r|=30$ datasets and $|\mathcal M_r|=9$ models uniformly without replacement from the full TSC benchmark, and define the reference ranking using all scores on $\mathcal D_r \times \mathcal M_r$. We then run the standard dataset-subset protocol on each mini-benchmark for $k=2,\ldots,20$, with $\alpha=0.8$ and $T=50$ training splits: each split samples $\lfloor \alpha |\mathcal D_r| \rfloor$ candidate datasets, selects $k$ of them, and compares the induced ranking to the mini-benchmark reference ranking using MAE, Spearman $\rho$, NDCG@5, and Kendall $\tau$. Metric curves are aggregated by trapezoidal AUC. We compare Random, K-Means, and cosine/Euclidean farthest-first, fixing each method to its best representation from Table~\ref{tab:auc_combined}. To compare across scales, we report relative AUC gain over Random: $(\mathrm{AUC}_{\rm rand}-\mathrm{AUC}_{m})/\mathrm{AUC}_{\rm rand}$ for MAE and $(\mathrm{AUC}_{m}-\mathrm{AUC}_{\rm rand})/\mathrm{AUC}_{\rm rand}$ for higher-is-better metrics, so positive values always indicate improvement.
The resulting relative gains are reported in Table~\ref{tab:scale_analysis}; positive values indicate that the corresponding method outperforms Random under the same benchmark scale.

\begin{table}[t]
\centering
\caption{Relative AUC gain over Random (\%, higher is better). Each cell reports $\Delta$MAE; $\Delta\rho/\Delta$NDCG@5/$\Delta\tau$.}
\label{tab:scale_analysis}
\scriptsize
\setlength{\tabcolsep}{2.5pt}
\begin{tabular}{lccc}
\toprule
Method & TSC-full & TSC $30{\times}9$ & RecSys \\
\midrule
Cosine FAFI
& $+12.8;\ +3.4/+0.5/+6.1$
& $+3.6;\ -9.7/-11.5/-6.1$
& $+6.3;\ -0.3/-0.5/+0.5$ \\
K-Means
& $+14.1;\ +1.3/+0.2/+2.9$
& $+5.4;\ +11.7/+15.2/+5.7$
& $+8.9;\ -1.8/+0.1/-1.8$ \\
Euclidean FAFI
& $+17.3;\ +1.8/+0.7/+3.7$
& $-1.9;\ +9.5/+5.8/+6.0$
& $+4.5;\ -3.2/-0.3/-2.9$ \\
\bottomrule
\end{tabular}
\end{table}

At full TSC scale, all non-random methods improve over Random on both MAE and rank-based metrics. After downscaling TSC to $30{\times}9$, selection becomes more method-dependent: K-Means remains robust and improves all metrics, Euclidean farthest-first preserves rankings but slightly worsens MAE, while cosine farthest-first becomes unstable. RecSys differs from downscaled TSC: methods improve MAE but do not consistently improve rank-based metrics. Thus, benchmark scale partly affects stability, but scale alone does not explain the RecSys behavior; representation quality and domain-specific rank geometry remain crucial.

\end{document}